\theoremstyle{plain}
\newtheorem{theorem}{Theorem}[section]
\theoremstyle{definition}
\newtheorem{definition}[theorem]{Definition}
\theoremstyle{remark}
\icmltitlerunning{Rethinking Predictive LLM Routing: When Simple KNN Beats Complex Learned Routers}
\begin{document}

\twocolumn[
  \icmltitle{Rethinking Predictive LLM Routing:\\When Simple KNN Beats Complex Learned Routers}



  \icmlsetsymbol{equal}{*}

  \begin{icmlauthorlist}
    \icmlauthor{Yang Li}{me}
  \end{icmlauthorlist}

  \icmlaffiliation{me}{Independent researcher}

  \icmlcorrespondingauthor{Yang Li}{yli.ml.research@gmail.com}

  \icmlkeywords{Machine Learning, ICML}

  \vskip 0.3in
]



\printAffiliationsAndNotice{}  

\begin{abstract}
As large language models (LLMs) grow in scale and specialization, routing—selecting the best model for a given input—has become essential for efficient and effective deployment. While recent methods rely on increasingly complex learned routing strategies, their dependence on disparate training data and evaluation setups makes comparison and generalization difficult. In this work, we fundamentally rethink LLM routing by questioning whether such complexity is necessary. We show that a well-tuned k-Nearest Neighbors (kNN) approach not only matches but often outperforms state-of-the-art learned routers while being significantly more efficient. To support systematic evaluation, we introduce a suite of standardized routing benchmarks spanning instruction-following, question-answering, and reasoning tasks, as well as the first multi-modal routing dataset involving visual inputs. Our theoretical analysis reveals that the strong locality properties of model performance in embedding space enable simple non-parametric methods to achieve superior routing decisions with lower sample complexity than parametric approaches. These findings challenge the prevailing trend toward sophisticated architectures and demonstrate that simple, interpretable approaches can be surprisingly effective for LLM routing.
\end{abstract}

\section{Introduction}
The proliferation of large language models (LLMs) in recent years has created an increasingly diverse ecosystem of models with varying sizes, capabilities, and specializations \cite{team2023gemini,jaech2024openai,guo2025deepseek,yang2024qwen2}. As organizations and users gain access to this expanding array of models—each with different strengths, computational demands, and cost profiles—a crucial challenge has emerged: how to intelligently select the most appropriate model for a given input. This challenge, known as LLM routing, carries significant implications for both cost-effective deployment and optimal user experience \cite{varangot2025doing,chen2025harnessing}.

Current LLM routing approaches typically employ sophisticated learned policies that leverage various signals, such as model preferences \cite{ong2024routellm}, prompt embeddings \cite{chen2024routerdc}, or external scoring functions \cite{lu2023routing,stripelis2024tensoropera}. These methods vary in their implementation strategies—some frame routing as a selection or classification problem \cite{stripelis2024tensoropera,ding2024hybrid,feng2024graphrouter}, while others develop predictive models to estimate utility scores of each LLM for specific inputs \cite{nguyen2024metallm,li2025llm}. The field has witnessed an escalating trend toward increasingly complex architectures, including graph neural networks, attention mechanisms, and multi-layered predictive models, often without rigorous comparison to simple baselines. Moreover, these diverse approaches rely on different training datasets, evaluation protocols, and underlying assumptions, creating challenges for fair comparisons and raising questions about their generalizability.

While the field continues to develop increasingly complex routing solutions, we fundamentally question whether such sophistication is necessary. In this paper, we take a step back and reconsider LLM routing from first principles, posing a critical question: \textit{how far can we get with simple, non-parametric methods like k-Nearest Neighbors (kNN)?} Our investigation yields a surprising finding: when carefully implemented and tuned, a simple kNN-based router not only matches but often outperforms a wide range of complex learned approaches across diverse routing scenarios, while offering substantial advantages in computational efficiency and robustness.

To enable systematic evaluation and address the inconsistent protocols that have hindered meaningful comparisons, we introduce a comprehensive benchmark suite for LLM routing. This suite spans instruction-following, question-answering, and reasoning tasks, establishing consistent evaluation protocols and performance metrics. We further extend our investigation to multi-modal scenarios by developing the first benchmark for routing between vision-language models, demonstrating that the effectiveness of simple approaches generalizes beyond text-only applications.

Our comprehensive evaluation reveals several key insights that challenge prevailing assumptions in the field. First, kNN-based routing achieves competitive or superior performance while being significantly more computationally efficient than complex alternatives. Second, simple methods demonstrate greater robustness under distribution shift, maintaining more stable performance when applied to out-of-distribution queries. Third, through theoretical analysis, we show that the strong locality properties in embedding spaces—where semantically similar queries benefit from similar models—enable non-parametric methods to achieve effective routing with lower sample complexity than parametric approaches.

These findings represent more than just an empirical comparison; they constitute a fundamental rethinking of complexity assumptions in LLM routing. Our work demonstrates that the field may be over-engineering solutions to a problem that can be effectively addressed with simpler, more interpretable approaches. This has important practical implications: simpler routing methods can significantly reduce the computational and engineering overhead required to deploy sophisticated multi-model systems, potentially democratizing access to effective LLM routing for organizations with limited resources.

By challenging the prevailing trend toward architectural sophistication, our work redirects research attention toward understanding when and why simple methods suffice, providing both practical guidance for practitioners and theoretical insights that can inform future routing system design. We believe this represents an important contribution to the field's maturation, emphasizing the value of thorough baseline evaluation before investing in complex solutions.

\vspace{-5pt}
\section{Background and Related Works}
As the ecosystem of large language models (LLMs) becomes increasingly diverse, optimizing the trade-off between performance and computational cost has become a central research challenge. To address this challenge, three primary strategies have emerged: ensemble methods, cascading approaches, and routing systems.

\textbf{Ensemble methods} improve robustness and answer quality by aggregating outputs from multiple LLMs. Prior works such as LLM-Blender \cite{jiang2023llm}, Blending \cite{lu2024blending}, and Fusion \cite{wang2023fusing} demonstrate that ensembling can yield strong performance across a range of tasks. However, this approach comes at a significant cost—ensemble methods require simultaneous inference from multiple models, leading to increased latency and computational overhead that scales linearly with the number of models.

\textbf{Cascading approaches} aim to reduce these costs by invoking models in a sequence of increasing capability and expense. Systems like FrugalGPT \cite{chen2023frugalgpt}, AutoMix \cite{aggarwal2024automix}, and Two-tier Selection \cite{ramirez2024optimising} start with smaller, faster models and escalate to more capable ones only when necessary. While this sequential design can lower the average cost compared to always using the most expensive model, it still incurs multiple model calls for challenging queries and often relies on auxiliary quality estimation mechanisms, which can introduce additional latency and system complexity.

\textbf{Routing systems} offer the most direct and efficient alternative by selecting a single LLM to handle each query. These systems eliminate the need for multiple model calls, minimizing both cost and latency while maintaining the flexibility to choose the most appropriate model for each input. Most routing approaches rely on performance prediction to guide model selection. Some methods predict evaluation scores or reward values for each model given an input \cite{shnitzer2023large, hari2023tryage, vsakota2024fly}, while others take a comparative approach by estimating win rates or preference relationships between model pairs \cite{ding2024hybrid,ong2024routellm}. More recently, MetaLLM \cite{nguyen2024metallm} and LLMBandit \cite{li2025llm} frame the problem as a contextual bandit, learning routing policies that balance exploration and exploitation for model selection.

Despite the diversity of routing approaches, the field has increasingly gravitated toward complex architectures—including graph neural networks, attention mechanisms, and sophisticated predictive models—often without systematic comparison to simpler alternatives. Table \ref{tab:routers} provides an overview of existing routing approaches, categorized by their formulation, training signals, and whether they utilize support sets, illustrating this trend toward complexity.

In this work, we focus on the routing setting due to its strong efficiency potential and practical relevance. Our study systematically evaluates both simple and complex routing methods, revealing the surprising effectiveness of non-parametric approaches and raising important questions about when architectural complexity is truly necessary for effective LLM routing.

\begin{table*}[]
    \centering
    \small
    \caption{Overview of existing routers.}
    \label{tab:routers}
    \begin{tabular}{ccccc}
    \toprule
        Routers & Routing Formulation & Training Signal & Training Objective & Support Set\\
    \midrule
        TensorOpera \cite{stripelis2024tensoropera} & Classification & BERTsim score & Cross Entropy & No\\
        HybridLLM \cite{ding2024hybrid} & Classification & BART score & Cross Entropy & No\\
        ZOOTER \cite{lu2023routing} & Classification & Reward model & KL divergence & No\\
        GraphRouter \cite{feng2024graphrouter} & Classification & Evaluation metric & Cross Entropy & Yes \\
        RouterDC \cite{chen2024routerdc} & Embedding Similarity & Answer correctness & Contrastive learning & No\\
        MetaLLM \cite{nguyen2024metallm} & Multi-armed Bandit & Online utility score & Linear reward model & No\\
        LLMBandit \cite{li2025llm} & Multi-armed Bandit & Online utility score & PPO policy learning & No\\
        RouteLLM \cite{ong2024routellm} & Ranking & Pairwise preference & Contrastive loss & No\\
        Eagle \cite{zhao2024eagle} & Ranking & Pairwise preference & ELO calculation & Yes\\
        Routoo \cite{mohammadshahi2024routoo} & Utility Prediction & Answer correctness & Cross Entropy & No\\
        FORC \cite{vsakota2024fly} & Utility Prediction & Evaluation metric & Cross Entropy & No\\
        Tryage \cite{hari2023tryage} & Utility Prediction & Expert model losses & Divergence & No\\
    \bottomrule
    \end{tabular}
\vspace{-8pt}
\end{table*}

\section{Predictive LLM Routing}
The goal of LLM routing is to select the most appropriate model from a set of available LLMs to process a given input query, subject to constraints such as performance, cost, or latency. Formally, let $x$ denote a query and $\mathcal{M} = \{m_1, m_2, \dots, m_M\}$ represent the pool of candidate models. For each $(x, m)$ pair, we define an unknown performance score $s(x, m)$ that captures the quality of model $m$'s response to query $x$ (e.g., accuracy or reward), and a cost function $c(x, m)$ that reflects the resource cost of invoking model $m$ on input $x$ (e.g., latency or compute cost, which may depend on input and response length). The objective is to select a model that maximizes utility while balancing cost:
\begin{equation}
m^* = \arg\max_{m \in \mathcal{M}} s(x, m) - \lambda \cdot c(x, m),
\end{equation}
where $\lambda$ is a user-defined trade-off parameter governing the preference for performance versus cost. 

We study two complementary classes of routing approaches, drawing inspiration from reinforcement learning (RL): \textbf{model selection policies}, which learn a routing policy to directly predict the optimal model, and \textbf{utility prediction methods}, which estimate the score $s(x, m)$ and cost $c(x, m)$ for each model and select the one with the highest predicted utility. Table~\ref{tab:routers} provides an overview of existing routers mapped to these categories.

\textbf{Routing as LLM Selection}
In this formulation, routing is framed as a direct model selection problem: given an input $x$, the router learns a policy $\pi(x)$ that maps queries to models, aiming to predict the optimal choice $m^*$. This perspective is analogous to policy gradient methods in RL, where an agent learns to output actions directly without explicitly estimating the value of all alternatives.

The policy $\pi(x)$ can be parametrized in various ways. Several approaches structure it as a classifier, with gold routing labels derived from preference data \cite{ong2024routellm}, reward scores \cite{stripelis2024tensoropera}, or thresholded evaluation metrics \cite{ding2024hybrid}. Other methods parametrize the policy as a distance measure between query and model embeddings, formulating the problem as an embedding learning task trained using contrastive objectives \cite{chen2024routerdc}. A third line of work frames routing as a contextual bandit problem, directly learning routing policies from online feedback \cite{nguyen2024metallm,li2025llm}. These approaches differ in their learning objectives and adaptation capabilities, but all ultimately aim to map inputs to optimal model selections.

\textbf{Routing as Utility Prediction}
Alternatively, routing can be framed as a utility estimation problem: for each input $x$ and candidate model $m$, the system predicts a scalar utility score $\hat{u}(x, m) = \hat{s}(x, m) - \lambda \cdot \hat{c}(x, m)$, where $\hat{s}(x, m)$ and $\hat{c}(x, m)$ are predicted score and cost. Routing is then performed by selecting the model with the highest predicted utility.

This formulation directly parallels $Q$-learning in reinforcement learning, where the $Q$-function $Q(x, a)$ estimates the expected return of taking action $a$ in state $x$, and the policy selects the action with the highest $Q$-value. In the LLM routing context, "actions" correspond to candidate models, and utility predictors serve as the $Q$-function. This approach enables more nuanced reasoning over model selection and naturally accommodates complex scenarios involving multi-objective tradeoffs. By explicitly estimating both performance and cost dimensions, utility prediction provides a flexible framework for balancing quality and efficiency in diverse deployment settings.

\textbf{Leveraging Support Sets for Enhanced Routing}
Beyond direct policy learning and utility prediction, routing can be enhanced by considering each query within its neighborhood context. In this approach, a query $x$ is routed by leveraging a support set $\mathcal{D}_{\text{support}} = \{(x_i, m, s(x_i, m), c(x_i, m))\}$ that captures performance scores and computational costs for semantically similar prompts $x_i$, providing valuable contextual signals about how different models perform on related inputs.

Both the model selection and utility prediction frameworks can incorporate support sets. Non-parametric methods like $k$-Nearest Neighbors (kNN) estimate $s(x, m)$ and $c(x, m)$ by retrieving similar inputs $x_i$ from $\mathcal{D}_{\text{support}}$ and aggregating their recorded outcomes. Meanwhile, parametric routers can be designed to process both the target query and its neighbors, enabling them to learn from local performance-cost landscapes rather than treating each query in isolation.

This contextual approach bridges the gap between instance-level prediction and dataset-level routing patterns, enabling practical test-time adaptive routing that can respond to distribution shifts without requiring complete retraining. The effectiveness of this approach depends critically on the locality properties of model performance in the embedding space—if semantically similar queries tend to benefit from similar models, then neighborhood-based methods can achieve strong routing performance. As we demonstrate in our experiments, this locality assumption holds remarkably well in practice, enabling even simple kNN-based methods to achieve competitive or superior performance compared to sophisticated parametric approaches.

\vspace{-5pt}
\section{Routing Benchmarks}
\vspace{-3pt}
To enable systematic evaluation of routing approaches, we develop standardized benchmarks that address the inconsistent evaluation protocols which have hindered meaningful comparisons between routing methods.

\vspace{-2pt}
\subsection{Text-Based Routing Benchmarks}
While RouterBench \cite{hu2024routerbench} provides a valuable starting point for routing evaluation with 11 LLMs across 6 tasks, its limited model pool constrains applicability to diverse real-world scenarios. To address this limitation, we construct a more comprehensive benchmark incorporating a broader range of models and tasks from established evaluation frameworks.

Our benchmark leverages three widely-used LLM evaluation leaderboards: AlpacaEval \cite{dubois2024length}, Open LLM Leaderboard v2 \cite{open-llm-leaderboard-v2}, and HELM-Lite \cite{liang2022holistic}. From each leaderboard, we select three model families (e.g., OpenAI GPT series, Google Gemini series), with multiple variants per family to represent practical routing scenarios. The performance scores $s(x,m)$ are derived directly from the evaluation metrics reported in each respective leaderboard, while the costs $c(x, m)$ are calculated using the actual pricing of each API, based on input and output token counts. For detailed benchmark construction, model selection criteria, and scoring methodology, please refer to Appendix~\ref{sec:appendix_benchmarks}.

\vspace{-2pt}
\subsection{Vision-Language Routing Benchmarks}
To address routing challenges in the increasingly important multi-modal domain, we introduce the first benchmark for routing between vision-language models. Our benchmark builds upon vHELM \cite{lee2024vhelm}, a comprehensive evaluation framework that systematically assesses capabilities across visual understanding, reasoning, and instruction following.

We incorporate leading multi-modal models from Claude and OpenAI's families, selecting variants that represent different performance-cost tradeoffs. For evaluation, we curate five diverse datasets focusing on visual question answering and visual reasoning tasks, carefully chosen to represent varying levels of complexity and different visual understanding requirements. Detailed information about model selection, dataset characteristics, and evaluation metrics is provided in Appendix~\ref{sec:appendix_benchmarks}.

This benchmark enables evaluation of how routing approaches handle multi-modal inputs, where optimal model selection depends not only on the text query but also on visual content characteristics, image quality, and the specific interplay between visual and textual components of the task.

\vspace{-2pt}
\subsection{Evaluation Protocol}
\vspace{-1pt}
To systematically evaluate routing approaches with different objectives, we develop distinct evaluation protocols for utility prediction and model selection approaches:

\textbf{Utility Prediction Evaluation (Recommended)} 
These methods explicitly predict performance scores and costs, allowing us to trace the complete Pareto front by varying the trade-off parameter $\lambda$ across a wide range. We assess router effectiveness by measuring the area under the curve (AUC) of the non-decreasing convex hull in the cost-performance space. This metric comprehensively captures a router's ability to balance performance and cost across the entire spectrum of preference settings, with higher AUC indicating better overall routing decisions. We normalize score and cost values so that the maximum AUC score is 100.

\textbf{Selection-Based Evaluation} 
These methods directly map queries to models without explicit utility scores, making construction of a full Pareto front challenging. We evaluate these routers at three distinct cost-performance preferences: low-cost ($\lambda = 1.0 / c_{\max}$), balanced ($\lambda = 0.5 / c_{\max}$), and high-performance ($\lambda = 0.1 / c_{\max}$), where $c_{\max}$ is the maximum cost in each benchmark.

However, this approach has important limitations. Since selection-based methods only report accuracy under fixed preference settings, they can obscure true cost-performance trade-offs. High accuracy scores may simply reflect selection of expensive models rather than intelligent routing decisions, as preference parameters only indirectly control cost through the utility function.

For both evaluation approaches, we report performance relative to oracle and random baselines, enabling precise quantification of routing effectiveness. Given the limitations of selection-based routing formulation, our main results focus on utility prediction evaluation, with selection-based results provided in Appendix~\ref{sec:selection_eval} for completeness.

\vspace{-3pt}
\section{Routing Approaches}
\vspace{-2pt}
Building on our benchmarking framework, we evaluate routing approaches ranging from simple non-parametric methods to sophisticated neural architectures. We organize these approaches by complexity, with all methods supporting both utility prediction (estimating performance scores and costs) and model selection (directly classifying queries to models) formulations using consistent input representations.

\vspace{-3pt}
\paragraph{Non-Parametric Methods}
\textbf{k-Nearest Neighbors (kNN)} routes queries by retrieving the $k$ most similar examples from a training set and aggregating their performance outcomes. For utility prediction, it averages observed scores and costs from neighbors; for model selection, it uses majority voting among top-performing models for similar queries. This approach requires no training and adapts to new queries by leveraging local neighborhood information.

\vspace{-3pt}
\paragraph{Linear Methods}
The \textbf{Linear Router} employs linear regression (utility prediction) or logistic regression (model selection) over query embeddings, capturing linear relationships between query features and routing outcomes. The \textbf{Linear Matrix Factorization (MF) Router}, inspired by RouteLLM \cite{ong2024routellm}, assigns learnable embeddings to each model and predicts utility scores through linear operations on prompt-model embedding interactions, explicitly modeling the relationship between query characteristics and model capabilities.

\vspace{-3pt}
\paragraph{Neural Network Methods}
To explore non-linear routing strategies, we implement neural architectures that can capture complex patterns. The \textbf{MLP Router} uses a feed-forward network with multiple hidden layers (100 dimensions each) to learn non-linear mappings between query embeddings and routing outcomes, balancing expressiveness with computational efficiency. The \textbf{MLP Matrix Factorization (MF) Router} extends the linear MF approach by processing prompt-model embedding interactions through multi-layer perceptrons, enabling more sophisticated modeling of query-model relationships while maintaining interpretable factorized structure.

\vspace{-3pt}
\paragraph{Graph-Based and Attention Methods}
For the most sophisticated approaches, we evaluate methods that explicitly model relationships between queries and models. The \textbf{Graph Router} \cite{feng2024graphrouter} represents routing as a bipartite graph where queries and models form nodes with utility-representing edges, using graph neural networks to capture complex patterns between similar queries and model performance.

We also implement two attention-based architectures leveraging support sets. The \textbf{Attentive Router} uses permutation-invariant attention modules to process support examples (prompt-utility pairs) with cross-attention between target prompts and support examples for contextual routing decisions. The \textbf{Double Attentive Router} extends this with dual attention mechanisms across both queries and candidate models, capturing similarities between queries and relationships between different models' performance characteristics on related inputs.

\vspace{-3pt}
\paragraph{Implementation}
All approaches use consistent query representations: BERT embeddings for text-only routing and VLM2Vec for multi-modal scenarios, isolating algorithmic impact while ensuring fair comparison. Detailed specifications, hyperparameter settings, and training protocols are in Appendix~\ref{sec:appendix_models}.

\begin{table*}[]
    \centering
    \caption{AUC scores on a range of text routing benchmarks. All methods predict utility scores to inform routing decisions. Higher is better.}
    \label{tab:text_utility}
    \scriptsize
    \resizebox{\textwidth}{!}{
    \begin{tabular}{l|ccc|ccc|ccc|c|c}
    \toprule
         & \multicolumn{3}{c|}{AlpacaEval} & \multicolumn{3}{c|}{HELM-Lite} & \multicolumn{3}{c|}{OpenLLM} & RouterBench & Avg\\
         &  OpenAI & Claude & Mistral & OpenAI & Claude & Google & LLaMA3 & Qwen2.5 & Yi1.5 & &\\
    \midrule
         \textbf{Oracle} & 63.17 & 52.98 & 34.69 & 64.30 & 63.75 & 66.74 & 64.11 & 83.32 & 64.12 & 91.91 & 64.91\\
         \textbf{Random} & 36.47 & 34.56 & 27.76 & 48.94 & 41.89 & 45.15 & 37.86 & 41.98 & 33.94 & 54.93 & 40.35\\
    \midrule
         \textbf{kNN (k=10)} & 57.33 & 52.82 & 34.27 & 53.93 & 52.66 & 50.92 & 41.45 & 39.67 & 34.98 & 74.22 & 49.23\\
         \textbf{kNN (k=100)} & 57.38 & 52.77 & \textbf{34.26} & 54.73 & \underline{53.40} & \underline{52.18} & 48.98 & 56.18 & 39.74 & 77.22 & 52.68\\
         \textbf{Linear} & \textbf{57.60} & \textbf{52.84} & \underline{34.26} & \textbf{55.61} & \textbf{53.54} & \textbf{52.64} & 48.94 & \textbf{56.46} & \underline{41.83} & \textbf{77.68} & \textbf{53.14}\\
         \textbf{Linear (MF)} & 56.85 & 52.08 & 33.76 & 55.31 & 53.40 & 51.93 & 48.94 & 55.91 & \textbf{41.88} & 77.27 & 52.73\\
         \textbf{MLP} & 55.50 & 51.29 & 32.18 & 54.43 & 50.82 & 51.51 & 48.52 & 54.62 & 41.19 & 77.08 & 51.71\\
         \textbf{MLP (MF)} & \underline{57.50} & \underline{52.84} & 34.20 & \underline{55.60} & 53.21 & 51.46 & 48.71 & \underline{56.21} & 41.63 & 77.40 & \underline{52.88}\\
         \textbf{Graph (k=10)} & 55.66 & 45.64 & 31.50 & 53.34 & 45.19 & 50.45 & 40.06 & 54.99 & 33.62 & 76.24 & 48.67\\
         \textbf{Graph (k=100)} & 57.37 & 51.84 & 32.25 & 52.70 & 51.89 & 51.41 & \textbf{49.10} & 55.24 & 39.52 & 76.87 & 51.82\\
         \textbf{Attn (k=10)} & 54.43 & 45.12 & 26.52 & 53.07 & 52.35 & 51.89 & 41.09 & 48.00 & 34.58 & 73.53 & 48.06\\
         \textbf{Attn (k=100)} & 55.29 & 52.49 & 27.94 & 52.23 & 52.12 & 50.61 & 45.12 & 56.03 & 32.65 & 77.37 & 50.18\\
         \textbf{D-Attn (k=10)} & 54.50 & 44.62 & 29.34 & 53.31 & 50.17 & 51.38 & 39.54 & 32.05 & 34.81 & 74.40 & 46.41\\
         \textbf{D-Attn (k=100)} & 57.17 & 52.77 & 24.45 & 51.49 & 49.43 & 51.82 & \underline{49.04} & 23.90 & 34.92 & \underline{77.48} & 47.25\\
    \bottomrule
    \end{tabular}}
\vspace{-5pt}
\end{table*}

\vspace{-3pt}
\section{Quantitative Results}\label{sec:experiment}
\vspace{-2pt}
In this section, we present a comprehensive evaluation of diverse routing approaches using utility prediction formulation, which provides the most reliable assessment of routing performance by revealing complete cost-performance trade-offs. For textual queries, we use BERT embeddings \cite{devlin2019bert} to represent the input text, while for multi-modal queries, we employ VLM2Vec \cite{jiang2024vlm2vec} to extract unified representations of image-text pairs. For completeness, selection-based routing results across all benchmarks are provided in Appendix~\ref{sec:selection_eval}, though we emphasize that utility prediction provides more reliable routing assessment.

\paragraph{Text-Based Routing Results}
Tables \ref{tab:text_utility} presents the performance of various routing approaches on our text-based benchmarks using AUC of the Pareto front, which comprehensively captures each router's ability to balance performance and cost across the entire spectrum of preference settings.

The most striking finding is the remarkable effectiveness of simple kNN-based routing. kNN with k=100 achieves an average AUC score of 52.68 across all benchmarks, which is competitive with or superior to more complex approaches like MLP (51.71) and Graph Neural Networks (51.82). Linear models also demonstrate surprisingly strong performance, achieving an average AUC of 53.14.

The relative performance ranking of routing methods remains consistent across different benchmarks and model pools. When kNN outperforms more complex methods on one benchmark, it typically maintains this advantage on others, suggesting that the effectiveness of simple routing approaches is robust across different evaluation settings.

Support set size has a consistent positive impact. Increasing k from 10 to 100 generally improves performance for kNN-based routing, demonstrating the value of leveraging larger neighborhoods for routing decisions.

\begin{wraptable}{r}{0.55\linewidth}
\centering
\caption{The cumulative time to route all examples in Routerbench.}
\label{tab:routerbench_latency}
\small
\begin{tabular}{lc}
\toprule
& Latency(s)\\
\midrule
\textbf{kNN (k=10)} & 75.76 \\
\textbf{kNN (k=100)} & 65.69 \\
\textbf{Linear} & 84.51 \\
\textbf{Linear (MF)} & 95.71 \\
\textbf{MLP} & 116.44 \\
\textbf{MLP (MF)} & 92.52 \\
\textbf{Graph (k=10)} & 866.34 \\
\textbf{Graph (k=100)} & 872.03\\
\textbf{Attn (k=10)} & 877.64 \\
\textbf{Attn (k=100)} & 883.31 \\
\textbf{D-Attn (k=10)} & 905.51 \\
\textbf{D-Attn (k=100)} & 906.32 \\
\bottomrule
\end{tabular}
\vspace{-8pt}
\end{wraptable}

\paragraph{Computational Efficiency Analysis}
Beyond routing accuracy, computational efficiency is crucial for practical deployment. Table~\ref{tab:routerbench_latency} reports the cumulative inference time required to route all examples in Routerbench. We measured routing time only, excluding one-time costs of building indices and training models. For fair comparison, CPU-based methods (kNN, Linear, MLP) were run on Intel Xeon Platinum 8275CL processors, while GPU-accelerated methods (Graph, attention-based) used NVIDIA A100 GPUs.

The efficiency differences are dramatic. kNN (k=100) requires only 65.69 seconds, making it the fastest method overall. Simple parametric models require 84-95 seconds, while graph and attention-based methods require over 866 seconds—approximately 13-14× slower than kNN.

For deployment scenarios requiring thousands of routing decisions per second, these latency differences become critical. The combination of competitive routing performance and superior computational efficiency makes simple methods particularly attractive for high-throughput applications.

\paragraph{Robustness Under Distribution Shift}
To evaluate generalization under distribution shift, we conducted cross-dataset evaluation where models trained on one RouterBench dataset are tested on others. This creates 36 train-test pairs per routing method, allowing us to measure performance degradation when moving from in-distribution to out-of-distribution queries.

\begin{wraptable}{r}{0.6\linewidth}
\centering
\scriptsize
\caption{Average AUC for in-distribution (ID) and out-of-distribution (OOD) test sets.}
\label{tab:ood_average}
\begin{tabular}{lccc}
\toprule
Model & ID & OOD & $\Delta$ \\
\midrule
\textbf{kNN (k=10)} & 73.53 & 70.41 & 3.13 \\
\textbf{kNN (k=100)} & 76.54 & 73.91 & 2.63 \\
\textbf{Linear} & 77.03 & 73.70 & 3.33 \\
\textbf{Linear (MF)} & 76.63 & 69.96 & 6.67 \\
\textbf{MLP} & 76.45 & 72.23 & 4.22 \\
\textbf{MLP (MF)} & 76.77 & 71.47 & 5.30 \\
\textbf{Graph (k=10)} & 75.58 & 70.39 & 5.19 \\
\textbf{Graph (k=100)} & 76.20 & 72.38 & 3.82 \\
\textbf{Attn (k=10)} & 72.87 & 67.93 & 4.94 \\
\textbf{Attn (k=100)} & 76.71 & 72.19 & 4.52 \\
\textbf{D-Attn (k=10)} & 73.74 & 68.58 & 5.16 \\
\textbf{D-Attn (k=100)} & 76.81 & 72.32 & 4.49 \\
\bottomrule
\end{tabular}
\vspace{-5pt}
\end{wraptable}

Table~\ref{tab:ood_average} shows that all routing methods experience performance drops under distribution shift, but the magnitude varies significantly. kNN (k=100) shows the smallest degradation (2.63 points), while Linear Matrix Factorization suffers the largest drop (6.67 points). Complex methods like attention-based routers show intermediate degradation (4.49-4.94 points).

This pattern suggests that non-parametric methods may be inherently more robust to domain shifts. Unlike parametric models that learn fixed global patterns, kNN can adapt locally to new query distributions by leveraging the most relevant examples from the training set.

\begin{table*}[]
    \centering
    \caption{AUC scores for vision language benchmark using utility prediction routing. Higher is better.}
    \label{tab:vlm_utility}
    \small
    \resizebox{\textwidth}{!}{
    \begin{tabular}{l|cc|cc|cc|cc|cc|c}
    \toprule
         & \multicolumn{2}{c|}{Blink} & \multicolumn{2}{c|}{Flickr30k} & \multicolumn{2}{c|}{MathVista} & \multicolumn{2}{c|}{MME} & \multicolumn{2}{c|}{MMMU} & Avg\\
         &  OpenAI & Claude & OpenAI & Claude &  OpenAI & Claude &  OpenAI & Claude &  OpenAI & Claude\\
    \midrule
    \textbf{Oracle} & 98.53 & 92.97 & 78.00 & 73.91 & 89.30 & 63.66 & 97.79 & 90.17 & 92.30 & 80.28 & 85.69\\
    \textbf{Random} & 71.82 & 57.72 & 54.12 & 49.50 & 52.91 & 36.96 & 75.21 & 52.05 & 64.47 & 47.76 & 56.25\\
    \midrule
    \textbf{kNN (k=10)} & 83.91 & 77.47 & 59.29 & 61.41 & 65.88 & 49.57 & 90.84 & 84.52 & 75.04 & 58.26 & 70.62\\
    \textbf{kNN (k=100)} & 84.79 & \underline{78.34} & 58.89 & 61.16 & \textbf{72.96} & 50.29 & 90.84 & 84.56 & \underline{78.11} & 61.27 & \textbf{72.12}\\
    \textbf{Linear} & \textbf{85.48} & 77.05 & 59.42 & \textbf{61.69} & 70.85 & 50.28 & 91.29 & \underline{85.19} & 75.05 & 60.54 & 71.68\\
    \textbf{Linear (MF)} & 84.62 & 76.76 & 58.82 & 61.30 & 64.46 & 48.20 & \textbf{92.19} & 84.51 & 71.61 & 60.16 & 70.26\\
    \textbf{MLP} & 78.86 & 73.45 & 59.50 & 58.87 & 60.20 & 48.89 & \underline{91.96} & 83.27 & 68.12 & 57.95 & 68.11\\
    \textbf{MLP (MF)} & \underline{84.96} & 77.72 & \textbf{60.29} & \underline{61.61} & 64.45 & 49.61 & 91.29 & \textbf{85.62} & 74.66 & 61.67 & 71.19\\
    \textbf{Graph (k=10)} & 83.55 & 76.45 & 58.66 & 58.34 & 64.48 & 44.71 & 90.79 & 84.02 & 75.04 & \textbf{62.04} & 69.81\\
    \textbf{Graph (k=100)} & 84.79 & \textbf{78.65} & 58.56 & 61.40 & 70.85 & 48.22 & 91.06 & 84.56 & 77.73 & \underline{61.67} & \underline{71.75}\\
    \textbf{Attn (k=10)} & 84.08 & 77.14 & 58.31 & 58.29 & 66.57 & 46.10 & 91.06 & 84.75 & 73.87 & 58.26 & 69.84\\
    \textbf{Attn (k=100)} & 84.95 & 78.16 & 58.59 & 54.42 & \underline{71.54} & \underline{50.97} & 90.84 & 84.59 & \textbf{79.26} & 61.29 & 71.46\\
    \textbf{D-Attn (k=10)} & 83.56 & 76.99 & \underline{59.90} & 59.37 & 67.27 & 48.87 & 90.82 & 84.75 & 73.80 & 58.28 & 70.36\\
    \textbf{D-Attn (k=100)} & 84.79 & 78.32 & 58.72 & 51.93 & 70.12 & \textbf{50.97} & 89.87 & 84.59 & 73.41 & 61.28 & 70.40\\
    \bottomrule
    \end{tabular}}
\vspace{-2pt}
\end{table*}

\paragraph{Multi-Modal Routing Results}
Tables~\ref{tab:vlm_utility} shows performance on our vision-language model routing benchmarks using utility prediction evaluation. Simple methods maintain their effectiveness in multi-modal settings, with kNN (k=100) achieving an average AUC of 72.12—outperforming most neural approaches.

Multi-modal routing confirms the patterns observed in text-only scenarios. The effectiveness of simple methods extends seamlessly to complex multi-modal inputs, where optimal model selection depends on both textual content and visual characteristics.

Attention-based architectures show particular strength on certain multi-modal tasks like MME, likely due to their ability to model interactions between visual and textual components more explicitly than simpler approaches.

\paragraph{Statistical Validation and Embedding Analysis}
To ensure our findings are robust, we conducted statistical validation across five independent runs. The results confirm consistent relative performance: kNN (k=100) achieves 77.31 ± 0.27 AUC, Linear achieves 77.52 ± 0.21 AUC, and MLP achieves 76.94 ± 0.33 AUC. The small standard deviations indicate that our conclusions are not dependent on particular random seeds or data splits.

Our analysis shows that switching from BERT to SFR embeddings provides modest improvements, but importantly, the relative ranking of routing methods remains consistent across embedding types (detailed results in Appendix~\ref{sec:embedding}).



\section{Theoretical Analysis}
In this section, we develop a theoretical framework to explain why simple kNN-based routers often match or outperform more complex learned routers. Our analysis addresses a fundamental question: under what conditions does the local structure of the query-performance space provide sufficient signal for effective routing?

We begin by formalizing the locality property that underlies effective kNN routing:

\begin{definition}[$\delta$-Locality]
Given a query embedding space $\mathcal{X}$, utility function $u(x, m)$, and distance function $d(\cdot, \cdot)$, model performance exhibits $\delta$-locality if for any two queries $x_1$ and $x_2$: 
$$d(x_1, x_2) < \delta \implies |u(x_1, m) - u(x_2, m)| < \epsilon(\delta)$$
where $\epsilon(\delta)$ is a monotonically increasing function with $\epsilon(0) = 0$.
\end{definition}

This definition captures the intuition that semantically similar queries should yield similar utility scores from the same model. If this property holds, then kNN methods can make effective routing decisions by leveraging the performance patterns of nearby queries.

We now compare the sample complexity of kNN routers to parametric alternatives:

\begin{theorem}[Sample Complexity]
\label{thm:sample_complexity}
For a query distribution $\mathcal{D}$ with $\delta$-locality in utility space:

\textbf{(a)} A kNN router requires a training sample size of $\Theta\left(\frac{C_{\mathcal{X}, d}}{\delta^d} \cdot \log\left(\frac{1}{\alpha}\right)\right)$ to achieve expected regret $O(\epsilon(\delta))$ with probability $1-\alpha$, where $d$ is the intrinsic dimension of the embedding space and $C_{\mathcal{X}, d}$ is a constant depending on the space.

\textbf{(b)} A parametric router with $L$ Lipschitz-continuous layers requires a training sample size of $\Omega(L/\epsilon(\delta)^2)$ to achieve the same regret bound.
\end{theorem}

The key insight is that when the embedding space has low intrinsic dimension $d$ and strong locality properties (where $\epsilon(\delta)$ decreases rapidly with $\delta$), kNN routers require significantly fewer training samples than parametric routers to achieve the same performance guarantees. The full proof is provided in Appendix~\ref{sec:appendix_theorem}.

\paragraph{Empirical Validation}
We provide empirical evidence supporting our theoretical framework. Figure~\ref{fig:locality} demonstrates that as embedding distance between query pairs increases, the agreement between their model performance rankings decreases. We measure agreement as the correlation between performance scores across all models for each query pair, then bin query pairs by embedding distance. 

\begin{wrapfigure}{r}{0.55\linewidth}
\vspace{-2pt}
    \centering
    \includegraphics[width=0.48\linewidth]{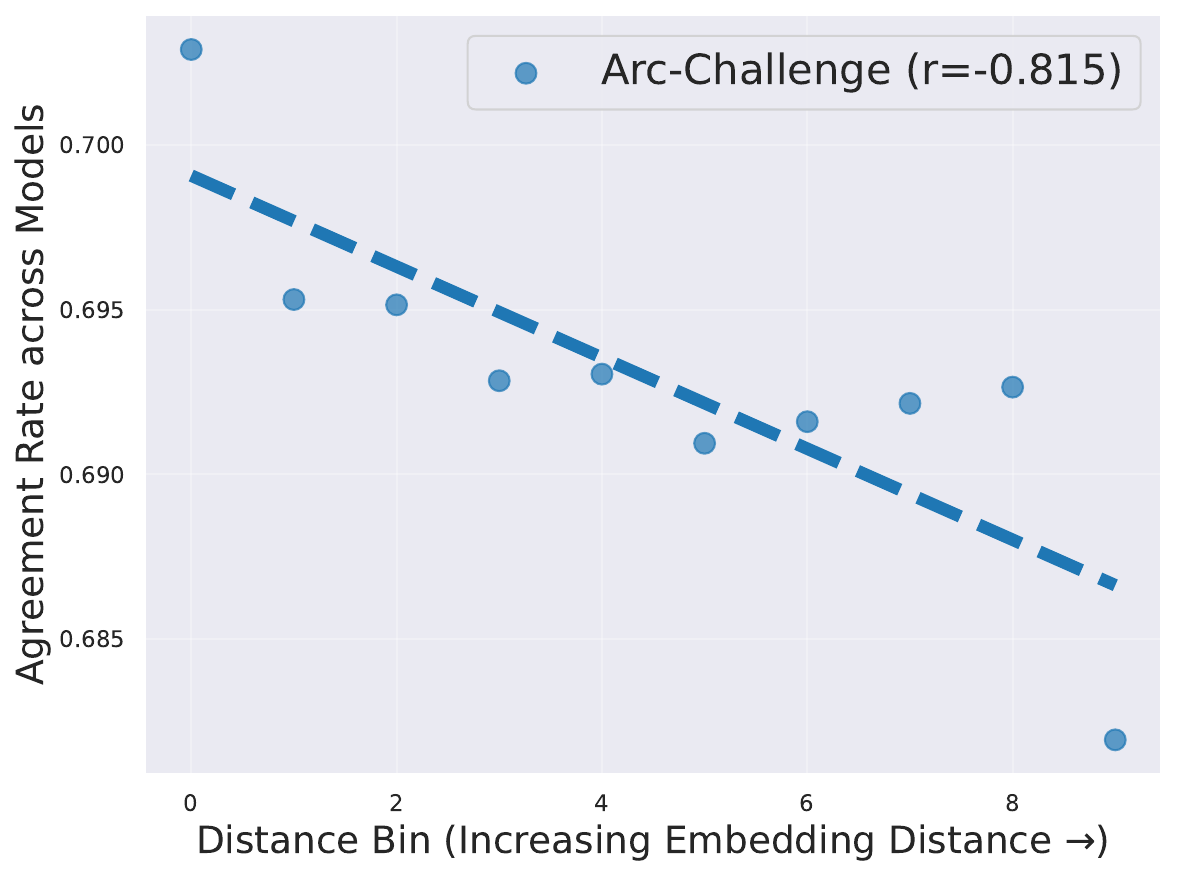}
    \includegraphics[width=0.48\linewidth]{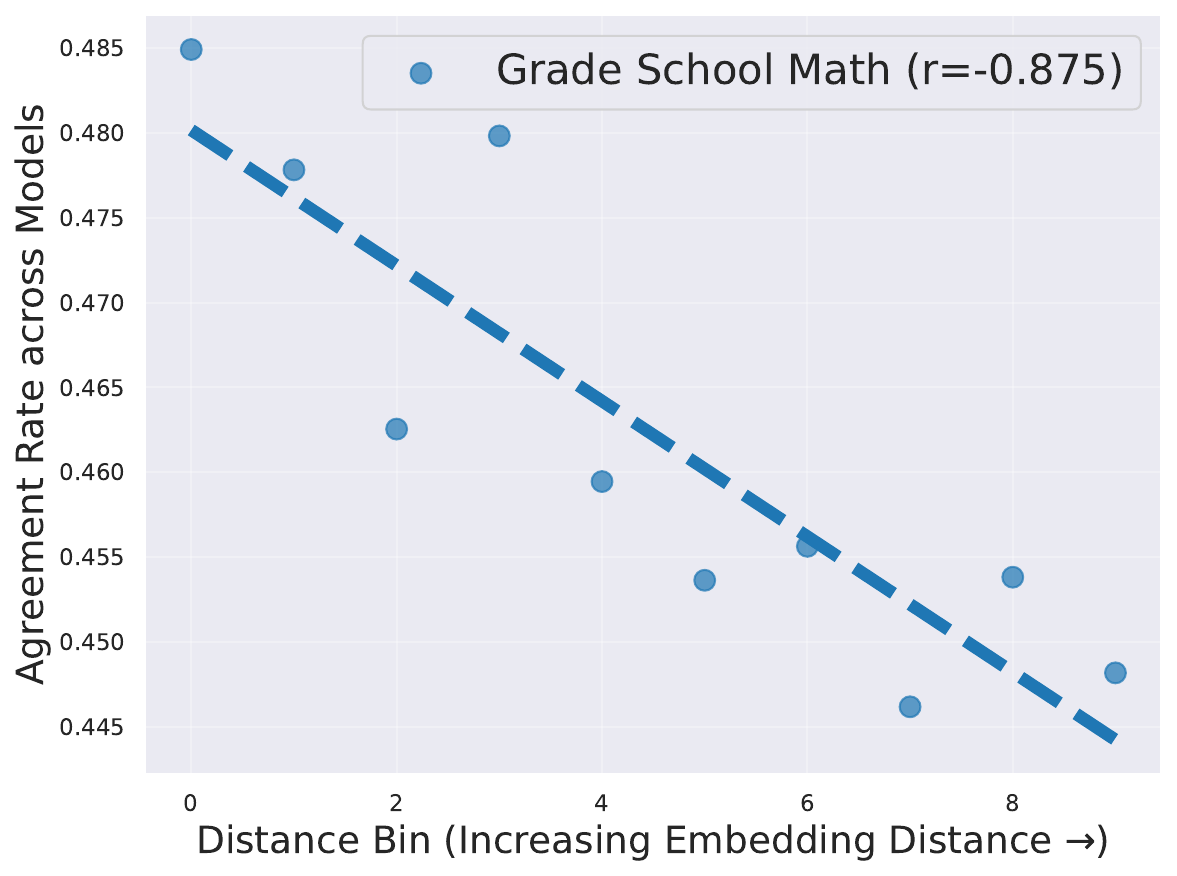}
    \caption{As embedding distance between query pairs increases, the agreement between their model performance rankings decreases, demonstrating locality in the query-performance space.}
    \label{fig:locality}
\vspace{-10pt}
\end{wrapfigure}

The strong negative correlation (r=-0.815 for Arc-Challenge, r=-0.875 for GSM) provides clear empirical support for the $\delta$-locality property: queries close in embedding space tend to have similar relative model performance patterns.

To validate the intrinsic dimensionality assumption in our theorem, we analyzed our embedding spaces using the TwoNN method \cite{facco2017estimating}. RouterBench embeddings have intrinsic dimensions of approximately 2-28, while vision-language embeddings have intrinsic dimensions of 13-18—substantially lower than their ambient dimensions (768 for BERT, 3584 for VLM2Vec). 

These empirical findings directly support our theoretical predictions. The observed low intrinsic dimensions ($d \approx 2-28$) combined with strong locality properties create the favorable conditions where kNN methods achieve sample complexity advantages over parametric approaches, explaining our empirical results showing competitive performance with significantly fewer parameters.

Our theoretical analysis explains why simple methods are effective: locality properties in embedding spaces make neighborhood-based approaches highly suitable for routing, particularly when combined with low intrinsic dimensionality and limited training data.

    
    

\section{Discussion and Conclusion}
This paper fundamentally rethinks LLM routing by challenging the assumption that sophisticated architectures are necessary for effective model selection. Through comprehensive evaluation across text-only and multi-modal benchmarks, we demonstrate that simple kNN-based routers consistently match or outperform complex learned approaches while offering substantial practical advantages.

\textbf{Key Findings}
Our investigation reveals that kNN routing achieves competitive performance (52.68 AUC on text, 72.12 on multi-modal benchmarks) while being 13-14× faster than complex alternatives. Most importantly, simple methods demonstrate superior robustness under distribution shift, with kNN showing only 2.63-point degradation compared to 6.67 points for complex methods.

Our analysis explains these findings: the strong locality properties in embedding spaces—where semantically similar queries benefit from similar models—enable kNN methods to achieve effective routing with lower sample complexity. The combination of low intrinsic dimensionality ($d \approx 2-28$) and strong locality creates favorable conditions for neighborhood-based routing.

\textbf{Practical Guidelines for Practitioners.}
Our theoretical and empirical findings provide actionable guidance for practitioners deciding between routing approaches. Before deployment, we recommend assessing dataset suitability through two offline diagnostics: (1) \textbf{Locality check}---compute the correlation between embedding distance and performance agreement as in Figure~\ref{fig:locality}. \emph{Strong negative correlation} indicates kNN is well-suited, while \emph{weak correlation} suggests parametric methods may be preferable. Our benchmarks exhibit strong locality ($r = -0.815$ to $-0.875$), explaining kNN's effectiveness. (2) \textbf{Intrinsic dimensionality}---estimate the intrinsic dimension $d$ using methods like TwoNN~\citep{facco2017estimating}. \emph{Low dimensionality relative to ambient dimension} favors kNN per Theorem~\ref{thm:sample_complexity}, while \emph{high intrinsic dimensionality approaching ambient dimension} may require parametric approaches. Our benchmarks show $d \approx 2$--28 (much lower than ambient dimensions of 768--3584), creating favorable conditions for kNN. For online deployment, practitioners can estimate confidence per query by monitoring the distance to the $k$-th nearest neighbor---queries with \emph{unusually large distances compared to the training distribution} indicate sparse coverage and warrant caution or fallback strategies. Additionally, \emph{low agreement among the $k$ nearest neighbors} signals uncertainty. These diagnostics enable practitioners to determine when kNN routing is appropriate and to identify individual queries requiring special handling, making our findings directly applicable to real-world deployment scenarios.

\textbf{Limitations}
While our results favor simple methods, complex approaches may be justified in specific scenarios: long-tail queries with sparse semantic neighborhoods, rapidly changing performance landscapes, or tasks requiring compositional reasoning across multiple domains. Additionally, kNN faces scalability constraints where memory requirements grow linearly with support set size.

\textbf{Implications}
Our findings challenge the field's trend toward architectural complexity, suggesting that research should focus on understanding fundamental routing properties rather than developing sophisticated architectures. The effectiveness of kNN routing reinforces a key principle: complex solutions should be justified by meaningful improvements over simple alternatives. For practitioners, the combination of competitive performance, superior efficiency, and enhanced robustness makes simple methods particularly attractive for deployment, potentially democratizing access to effective routing for organizations with limited resources. Future work should prioritize improving embedding quality, investigating alternative training signals, and developing methods to identify when complexity is truly necessary.








\section*{Impact Statement}

This work advances LLM routing by demonstrating that simple methods can
achieve competitive performance with superior efficiency and robustness.
The primary societal benefit is democratizing access to effective multi-model
systems by reducing computational and engineering overhead. Potential risks
include over-reliance on simple methods in scenarios where complexity is
justified (weak locality, high dimensionality). We provide diagnostic tools
to help practitioners determine appropriate method selection.

\bibliography{icml2026}
\bibliographystyle{icml2026}

\newpage
\appendix
\onecolumn

\counterwithin{figure}{section}
\renewcommand\thefigure{\thesection.\arabic{figure}}
\counterwithin{table}{section}
\renewcommand\thetable{\thesection.\arabic{table}}
\counterwithin{equation}{section}
\renewcommand\theequation{\thesection.\arabic{equation}}
\setcounter{theorem}{0}

\section{Additional Related Works}\label{sec:appendix_related_works}
Beyond the core routing approaches discussed in Section 2, several other research directions are relevant to our investigation of LLM routing mechanisms.

\paragraph{LLM Inference Optimization} 
Routing can be viewed as one component of the broader challenge of optimizing LLM inference. Complementary approaches include speculative decoding \cite{leviathan2023fast}, quantization \cite{egashira2024exploiting}, and hardware-specific optimizations. Our work on efficient routing complements these techniques, as the benefits of selecting the most appropriate model can be combined with optimizations to the inference process itself.

\paragraph{Connections to Recommendation Systems}
LLM routing shares fundamental similarities with recommendation systems, where the goal is to match users (queries) with items (models) that maximize utility. Several techniques from recommender systems research have direct analogs in routing approaches. Matrix factorization methods like those used in RouteLLM \cite{ong2024routellm} mirror collaborative filtering techniques in recommendation \cite{su2009survey}. Similarly, our kNN approach resembles item-based neighborhood methods that recommend items based on similarity to previously rated items \cite{sarwar2001item}. Content-based recommendation techniques that leverage item features parallel our embedding-based routing approaches. Even the dual optimization of performance and cost in routing mirrors multi-objective recommendation systems that balance relevance with diversity or novelty \cite{de2014reducing}. This connection offers promising opportunities to adapt proven recommendation techniques to the routing domain, particularly for handling cold-start problems with new queries or models, and for developing effective hybrid routing strategies.

\section{Routing Benchmarks}\label{sec:appendix_benchmarks}
In this section, we provide detailed information about our benchmark construction methodology, model selection criteria, and evaluation protocols for both text and vision-language routing benchmarks.

\subsection{Text-Based Routing Benchmarks}
We construct comprehensive benchmarks based on three established LLM evaluation frameworks: AlpacaEval, Open LLM Leaderboard v2, and HELM-Lite, as well as incorporating RouterBench. Our goal is to provide a diverse set of routing scenarios that reflect real-world deployment challenges.

\paragraph{Model Selection} 
For each benchmark, we select three model families with multiple variants per family to represent realistic routing scenarios:
\begin{itemize}[leftmargin=*]
    \item \textbf{AlpacaEval:} We include OpenAI models, Claude models, and Mistral models.
    \item \textbf{Open LLM Leaderboard:} We include LLaMA3 variants, Qwen2.5 variants, and Yi-1.5 variants.
    \item \textbf{HELM-Lite:} We include OpenAI models, Claude models, and Google Gemini models.
    \item \textbf{RouterBench:} We use all 11 models provided in the original benchmark.
\end{itemize}
Please refer to Table~\ref{tab:text_benchmark} for the list of models and their costs.

\begin{table}[!h]
    \centering
    \caption{Candidate models and their costs for text-based routing benchmark.}
    \label{tab:text_benchmark}
    \scriptsize
    \begin{tabular}{c|c|c|c|c}
    \toprule
        Benchmark & Model Family & Candidate Models & Input cost (\$ per 1M tokens) & Output cost (\$ per 1M tokens)\\
    \midrule
        \multirow{20}{*}{\rotatebox{90}{ALpacaEval}} & OpenAI Family & gpt-3.5-turbo-0301 & 1.5 & 2.0 \\
                   &               & gpt-3.5-turbo-0613 & 1.5 & 2.0 \\
                   &               & gpt-3.5-turbo-1106 & 1.0 & 2.0 \\
                   &               & gpt-4-0125-preview & 10 & 30 \\
                   &               & gpt-4o-2024-05-13  & 5  & 15 \\
                   &               & gpt-4              & 30 & 60 \\
                   &               & gpt-4-0314         & 30 & 60 \\
                   &               & gpt-4-0613         & 30 & 60 \\
                   &               & gpt-4-1106-preview & 10 & 30 \\
    \cmidrule{2-5}
                   & Claude Family & claude-2           & 8 & 24 \\
                   &               & claude-2.1         & 8 & 24 \\
                   &               & claude-3-5-sonnet-20240620 & 3 & 15 \\
                   &               & claude-3-opus-20240229 & 15 & 75 \\
                   &               & claude-3-sonnet-20240229 & 3 & 15 \\
                   &               & claude-instant-1.2 & 0.8 & 2.4 \\
    \cmidrule{2-5}
                   & Mistral Family & Mistral-7B-Instruct-v0.2 & 0.25 & 0.25 \\
                   &                & Mixtral-8x22B-Instruct-v0.1 & 2 & 6 \\
                   &                & Mixtral-8x7B-Instruct-v0.1 & 0.7 & 0.7 \\
                   &                & mistral-large-2402 & 8 & 24 \\
                   &                & mistral-medium & 2.7 & 8.1 \\
    \midrule
        \multirow{11}{*}{\rotatebox{90}{Open LLM Leaderboard v2}} & Qwen2.5 & Qwen2.5-0.5B-Instruct & 0.08 & 0.08 \\
                                &         & Qwen2.5-1.5B-Instruct & 0.2 & 0.2 \\
                                &         & Qwen2.5-7B-Instruct & 0.3 & 0.3 \\
                                &         & Qwen2.5-14B-Instruct & 0.8 & 0.8 \\
                                &         & Qwen2.5-32B-Instruct & 0.8 & 0.8 \\
                                &         & Qwen2.5-72B-Instruct & 1.2 & 1.2 \\
    \cmidrule{2-5}
                                & Llama3  & Llama-3-8B-Instruct & 0.2 & 0.2 \\
                                &         & Llama-3-70B-Instruct & 0.9 & 0.9 \\
    \cmidrule{2-5}
                                & Yi1.5   & Yi-1.5-6B-Chat & 0.3 & 0.3 \\
                                &         & Yi-1.5-9B-Chat & 0.4 & 0.4 \\
                                &         & Yi-1.5-34B-Chat & 0.8 & 0.8 \\
    \midrule
        \multirow{24}{*}{\rotatebox{90}{HELM-Lite}}               & OpenAI Family & gpt-4o-2024-05-13 & 5.0 & 15.0 \\
                                &               & gpt-4o-mini-2024-07-18 & 0.15 & 0.6 \\
                                &               & gpt-3.5-turbo-0613 & 1.5 & 2.0 \\
                                &               & gpt-4-0613 & 30 & 60 \\
                                &               & gpt-4-turbo-2024-04-09 & 10 & 30 \\
                                &               & gpt-4-1106-preview & 10 & 30 \\
    \cmidrule{2-5}
                                & Claude Family & claude-3-5-sonnet-20240620 & 3 & 15 \\
                                &               & claude-3-opus-20240229 & 15 & 75 \\
                                &               & claude-3-sonnet-20240229 & 3 & 15 \\
                                &               & claude-3-haiku-20240307 & 0.25 & 1.25 \\
                                &               & claude-2                & 8 & 24 \\
                                &               & claude-instant-v1    & 0.8 & 2.4 \\
                                &               & claude-v1.3 & 8 & 24 \\
                                &               & claude-2.1 & 8 & 24 \\
                                &               & claude-instant-1.2 & 0.8 & 2.4 \\
    \cmidrule{2-5}
                                & Google Family & gemini-1.0-pro-002 & 0.5 & 1.5 \\
                                &               & gemini-1.0-pro-001 & 0.5 & 1.5 \\
                                &               & gemini-1.5-pro-001 & 3.5 & 10.5 \\
                                &               & gemini-1.5-flash-001 & 0.075 & 0.3 \\
                                &               & text-bison-001 & 0.5 & 1.5 \\
                                &               & text-unicorn-001 & 7.0 & 21.0 \\
                                &               & gemma-2-9b-it & 0.2 & 0.2 \\
                                &               & gemma-2-27b-it & 0.6 & 0.6 \\
                                &               & gemma-7b  & 0.1 & 0.1 \\
    \midrule
        \multirow{11}{*}{\rotatebox{90}{RouterBench}}             & RouterBench   & gpt-3.5 & 1.0 & 2.0 \\
                                &               & claude-instant-v1 & 0.8 & 2.4 \\
                                &               & claude-v1  & 8.0 & 24.0 \\
                                &               & claude-v2  & 8.0 & 24.0 \\
                                &               & gpt-4 & 10.0 & 30.0 \\
                                &               & llama-70b & 0.9 & 0.9 \\
                                &               & Mixtral-8x7B & 0.6 & 0.6 \\
                                &               & Yi-34B & 0.8 & 0.8 \\
                                &               & WizardLM-13B & 0.3 & 0.3 \\
                                &               & code-llama-34B & 0.776 & 0.776 \\
                                &               & Mistral-7B & 0.2 & 0.2 \\
    \bottomrule             
    \end{tabular}

\end{table}

\paragraph{Task Coverage} 
Our benchmarks span a wide range of task categories:
\begin{itemize}[leftmargin=*]
    \item \textbf{AlpacaEval:} Instruction following tasks evaluated through human preference alignment.
    \item \textbf{Open LLM Leaderboard:} Mathematical reasoning, knowledge-based reasoning and instruction following tasks.
    \item \textbf{HELM-Lite:} Knowledge-intensive tasks, reasoning tasks, and instruction following capabilities.
    \item \textbf{RouterBench:} Six tasks spanning mathematical reasoning, code generation, knowledge, and commonsense reasoning.
\end{itemize}

\paragraph{Performance Scoring Methodology} 
For each benchmark, we use evaluation metrics aligned with their respective leaderboards:
\begin{itemize}[leftmargin=*]
    \item \textbf{AlpacaEval:} We use length-controlled win rates against the reference model.
    \item \textbf{Open LLM Leaderboard:} We use accuracy metrics for each task as reported in the leaderboard.
    \item \textbf{HELM-Lite:} We use the benchmark-specific metrics as reported in the leaderboard.
    \item \textbf{RouterBench:} We use the performance scores provided in the original benchmark.
\end{itemize}

\paragraph{Cost Calculation} 
We calculate costs based on actual pricing of each API, using:
\begin{equation*}
c(x, m) = \text{InputTokens} \times \text{InputPrice}_m + \text{OutputTokens} \times \text{OutputPrice}_m
\end{equation*}

For open-source models, we estimate costs using the pricing of equivalent commercial offerings from TogetherAI.

\subsection{Vision-Language Routing Benchmarks}
To address the growing importance of multi-modal systems, we develop the first benchmark for routing between vision-language models. This benchmark assesses how routing methods perform when inputs span different modalities. We utilize the existing evaluation outcomes from vHELM leaderboard.

\paragraph{Model Selection} 
We focus on two leading vision-language model families - OpenAI models and Claude models - with multiple variants representing different capability-cost tradeoffs. Table~\ref{tab:vlm_benchmark} list the candidate models and their costs.

\begin{table}[!h]
    \centering
    \caption{Candidate models and their costs for VLM routing benchmark.}
    \label{tab:vlm_benchmark}
    \tiny
    \begin{tabular}{c|c|c|c|c}
    \toprule
        Benchmark & Model Family & Candidate models & Input cost (\$ per 1M tokens) & Output cost (\$ per 1M tokens)\\
    \midrule
         \multirow{19}{*}{\rotatebox{90}{vHELM}} & OpenAI Family & gpt-4-turbo-2024-04-09 & 10 & 30 \\
               &               & gpt-4.1-2025-04-14 & 2 & 8 \\
               &               & gpt-4.1-mini-2025-04-14 & 0.4 & 1.6 \\
               &               & gpt-4.1-nano-2025-04-14 & 0.1 & 0.4 \\
               &               & gpt-4.5-preview-2025-02-27 & 75 & 150 \\
               &               & gpt-4o-2024-05-13 & 5 & 15 \\
               &               & gpt-4o-2024-08-06 & 2.5 & 10 \\
               &               & gpt-4o-2024-11-20 & 2.5 & 10 \\
               &               & gpt-4o-mini-2024-07-18 & 0.15 & 0.6 \\
               &               & o1-2024-12-17 & 15 & 60 \\
               &               & o3-2025-04-16 & 10 & 40 \\
               &               & o4-mini-2025-04-16 & 1.1 & 4.4 \\
    \cmidrule{2-5}
               & Claude Family & claude-3-5-sonnet-20240620 & 3 & 15 \\
               &               & claude-3-5-sonnet-20241022 & 3 & 15 \\
               &               & claude-3-7-sonnet-20250219 & 3 & 15 \\
               &               & claude-3-7-sonnet-20250219-thinking-64k & 3 & 15 \\
               &               & claude-3-haiku-20240307 & 0.8 & 4 \\
               &               & claude-3-opus-20240229 & 15 & 75 \\
               &               & claude-3-sonnet-20240229 & 3 & 15 \\
    \bottomrule
    \end{tabular}
\end{table}

\paragraph{Dataset Selection} 
We select five diverse vision-language datasets with varying task complexity:
\begin{itemize}[leftmargin=*]
    \item \textbf{Blink:} Tests basic visual perception and object recognition capabilities.
    \item \textbf{Flickr30k:} Evaluates natural image description and scene understanding.
    \item \textbf{MathVista:} Challenges models with mathematical reasoning over visual inputs.
    \item \textbf{MME:} A comprehensive evaluation benchmark covering multiple vision-language capabilities.
    \item \textbf{MMMU:} Tests multi-modal understanding across challenging academic domains.
\end{itemize}

Each dataset presents unique routing challenges, as models show different strengths across visual understanding tasks. For example, some models excel at detailed image description but struggle with visual reasoning tasks.

\subsection{Evaluation Protocol}

\subsubsection{AUC Score Calculation Methodology}
To evaluate utility prediction approaches, we calculate the area under the curve (AUC) of the non-decreasing convex hull in the cost-performance space using the following procedure:

\begin{enumerate}[leftmargin=*]
    \item For a given query $x$, we obtain predicted utility scores $\hat{u}(x, m) = \hat{s}(x, m) - \lambda \times \hat{c}(x, m)$ for each model $m \in \mathcal{M}$ across various values of $\lambda$.
    \item For each $\lambda$ value, we select the model with the highest predicted utility: $m_\lambda = \arg\max_{m \in \mathcal{M}} \hat{u}(x, m)$.
    \item We plot the actual performance-cost pairs $(c(x, m_\lambda), s(x, m_\lambda))$ in the cost-performance space.
    \item We compute the non-decreasing convex hull of these points to obtain the Pareto-optimal frontier.
    \item The AUC is calculated as the area under this frontier, normalized so that the maximum score is 100 and the maximum cost is 1.
\end{enumerate}

This approach ensures that routers are evaluated on their ability to make optimal trade-offs across the entire spectrum of cost-performance preferences.

\subsection{Data Splits and Reproducibility}
To ensure reproducible evaluation, we use the following data splits:
\begin{itemize}[leftmargin=*]
    \item For all benchmarks, we create random splits over prompts with 70\% training, 10\% validation, and 20\% test data.
    \item For support set experiments, we ensure no leakage between support sets and test queries.
    \item All random seeds are fixed and documented in our code to ensure reproducibility.
\end{itemize}

Our benchmark construction methodology ensures comprehensive evaluation across diverse tasks, models, and modalities, providing a robust foundation for comparing different routing approaches.

\section{Routing Approaches}\label{sec:appendix_models}
This section provides detailed implementation information for all routing approaches evaluated in our study, including architecture specifications, hyperparameter settings, training procedures, and computational requirements.

\subsection{Input Representations}
For all routing approaches, we use consistent input representations to ensure fair comparison:

\paragraph{Text Embeddings} 
For text-only queries, we primarily use BERT \cite{devlin2019bert} base model (768-dimensional embeddings) to encode input queries. We take the [CLS] token embedding as the query representation. In our ablation studies (Table \ref{tab:embedding}), we also experiment with SFR \cite{SFRAIResearch2024} embeddings (4096-dimensional) to assess the impact of embedding quality on routing performance. All embeddings are L2-normalized to unit length.

\paragraph{Vision-Language Embeddings} 
For multi-modal queries, we employ VLM2Vec \cite{jiang2024vlm2vec} to extract unified representations of image-text pairs. Specifically, we utilize the Qwen7B-based variant, which generates 3584-dimensional embeddings that effectively capture both textual content and visual features in a shared embedding space.

\subsection{Routing Model Architectures}

\paragraph{kNN Router} 
Our kNN implementation uses cosine similarity to identify the nearest neighbors in the embedding space. For utility prediction, we compute the weighted average of performance scores and costs from the $k$ nearest neighbors:
\begin{equation*}
\hat{s}(x, m) = \frac{1}{k}\sum_{i=1}^{k}s(x_i, m), \quad \hat{c}(x, m) = \frac{1}{k}\sum_{i=1}^{k} c(x_i, m)
\end{equation*}
For model selection, we use the majority voting mechanism, where each neighbor votes for the model that maximizes its utility score.

\paragraph{Linear Router} 
For utility prediction, we implement a linear regression model that maps query embeddings directly to performance scores and costs:
\begin{equation}
\hat{s}(x, m) = W_s^m \cdot \text{emb}(x) + b_s^m, \quad \hat{c}(x, m) = W_c^m \cdot \text{emb}(x) + b_c^m
\end{equation}
where $W_s^m$, $W_c^m \in \mathbb{R}^{768}$ and $b_s^m$, $b_c^m \in \mathbb{R}$ are learnable parameters for each model $m$. For model selection, we use a multi-class logistic regression:
\begin{equation*}
p(m|x) = \text{softmax}(W \cdot \text{emb}(x) + b)
\end{equation*}
where $W \in \mathbb{R}^{M \times 768}$ and $b \in \mathbb{R}^M$ are learnable parameters.

\paragraph{Linear Matrix Factorization (MF) Router} 
This approach assigns a learnable embedding to each model and predicts performance through the interaction between query and model embeddings:
\begin{equation*}
\hat{s}(x, m) = \text{emb}(x)^\top \cdot W_s \cdot \text{emb}(m) + b_s, \quad \hat{c}(x, m) = \text{emb}(x)^\top \cdot W_c \cdot \text{emb}(m) + b_c
\end{equation*}
where $W_s, W_c \in \mathbb{R}^{768 \times d_m}$, $\text{emb}(m) \in \mathbb{R}^{d_m}$ is the learnable embedding for model $m$ (we use $d_m=128$), and $b_s, b_c \in \mathbb{R}$ are learnable biases.

\paragraph{MLP Router} 
Our MLP consists of 3 fully-connected layers with ReLU activations:
\begin{equation*}
\hat{s}(x, m) = \text{MLP}_s^m(\text{emb}(x)), \quad \hat{c}(x, m) = \text{MLP}_c^m(\text{emb}(x))
\end{equation*}
The architecture uses $100$ dimensions for each hidden layer.

\paragraph{MLP Matrix Factorization (MF) Router} 
Similar to Linear MF, but replaces the linear projection with an MLP:
\begin{equation*}
\hat{s}(x, m) = \text{MLP}_s([\text{emb}(x); \text{emb}(m)]), \quad \hat{c}(x, m) = \text{MLP}_c([\text{emb}(x); \text{emb}(m)])
\end{equation*}
where $[;]$ denotes concatenation. The MLP has architecture use 100 hidden units with ReLU activations.

\paragraph{Graph Router} 
We implement a bipartite graph neural network where queries and models form nodes. Each query-model pair is connected by directed edges containing features that represent scores, token counts, and neighborhood information. Our implementation uses the GeneralConv class from PyTorch Geometric with the following architecture:
\begin{align*}
h_x^{(0)} &= W_{\text{proj}} \cdot \text{emb}(x), \quad h_m^{(0)} = \text{emb}(m)\\
e_{xm}^{(0)} &= W_{\text{edge}} \cdot [s_{xm}, t_{xm}, \text{mask}_{xm}]\\
h_i^{(l+1)} &= \sigma\left(\text{BN}^{(l)}\left(\text{GNN}^{(l)}(h_i^{(l)}, {h_j^{(l)} | j \in \mathcal{N}(i)}, {e_{ij}^{(l)}})\right)\right)
\end{align*}
where $\text{emb}(m)$ is a learned embedding for each model, $s_{xm}$ and $t_{xm}$ are the score and token count features between query $x$ and model $m$, $\text{mask}_{xm}$ indicates whether the edge features are observed or missing, $\text{BN}$ is batch normalization, and $\sigma$ is ReLU. The graph is enriched with $k$-nearest neighbor information from a pre-computed benchmark dataset. After message passing through multiple GNN layers (we use 2 layers with hidden dimension 128), the final prediction is computed using an MLP over the concatenated node representations of each query-model pair.

\paragraph{Attentive Router} 
This approach employs a Conditional Neural Process architecture with both self-attention and cross-attention mechanisms to process nearest neighbor examples:
\begin{align*}
\text{support} &= {(x_i, s(x_i, m), c(x_i, m)) | x_i \in \text{kNN}(x, k)}\\
Z &= \text{SelfAttention}(\text{support})\\
\text{latent}(x, m) &= \text{CrossAttention}(Q=\text{emb}(x), K=\text{emb}(\text{support}), V=Z)\\
\hat{s}(x, m) &= \text{MLP}_s(\text{latent}(x, m)), \quad \hat{c}(x, m) = \text{MLP}_c(\text{latent}(x, m)),
\end{align*}
where the self-attention module applies permutation-invariant processing to the support set, and the cross-attention module allows the query to attend to the processed support set. For each model, we retrieve $k$-nearest neighbors with their corresponding scores and token counts, and project them to a shared embedding space. The architecture uses multi-head attention (4 heads with dimension 32 per head), followed by MLP prediction heads for both score and token count estimation.

\paragraph{Double Attentive Router} 
Extends the Attentive Router by processing the support set with a dual attention mechanism that captures both query-level and model-level interactions:
\begin{align*}
\text{support} &= {(x_i, s(x_i, m), c(x_i, m)) | x_i \in \text{kNN}(x, k), m \in \mathcal{M}}\\
Z &= \text{DoubleAttention}(\text{support})\\
\text{latent}(x, m) &= \text{CrossAttention}(Q=\text{emb}(x), K=\text{emb}(\text{support}), V=Z)\\
\hat{s}(x, m) &= \text{MLP}_s(\text{latent}(x, m)), \quad \hat{c}(x, m) = \text{MLP}_c(\text{latent}(x, m))
\end{align*}
where the double attention mechanism applies attention operations across both examples and models, allowing for richer representations that capture cross-model dependencies. The support set is organized as a 3D tensor (batch × models × examples) and processed with sequential attention operations. This architecture enables the router to model complex interactions between different models on similar queries, thereby improving routing accuracy across the model pool.

\subsection{Training Protocols}

\paragraph{Loss Functions}
For utility prediction models, we use mean squared error (MSE) loss for both score and cost prediction:
\begin{equation*}
\mathcal{L} = \text{MSE}(\hat{s}(x, m), s(x, m)) + \alpha \cdot \text{MSE}(\hat{c}(x, m), c(x, m))
\end{equation*}
where $\alpha$ is a weighting coefficient that balances performance and cost prediction.

For model selection approaches, we use cross-entropy loss:
\begin{equation*}
\mathcal{L} = -\sum_{m \in \mathcal{M}} y_m \log(p(m|x))
\end{equation*}
where $y_m=1$ if $m$ is the optimal model for query $x$ under the specified trade-off parameter $\lambda$, and 0 otherwise.

\section{Selection-based Routing Evaluation}\label{sec:selection_eval}
Selection based routing methods directly map queries to models without explicit utility scores, making construction of a full Pareto front challenging. We therefore evaluate these routers at three distinct cost-performance preferences and report the average utility scores across them. To enable consistent comparison across benchmarks with different cost scales, we normalize the trade-off parameter using $c_{\max}$, the maximum cost in each routing benchmark:
\begin{itemize}[leftmargin=*,topsep=0pt,itemsep=0pt]
\item \textbf{Low-cost preference} ($\lambda = 1.0 / c_{\max}$): Heavily prioritizes efficiency while maintaining minimum performance requirements
\item \textbf{Balanced preference} ($\lambda = 0.5 / c_{\max}$): Balances performance and normalized cost
\item \textbf{High-performance preference} ($\lambda = 0.1 / c_{\max}$): Prioritizes response quality with reduced emphasis on efficiency
\end{itemize}

However, this evaluation approach has important limitations that can provide misleading assessments of routing quality. Since selection-based methods only report accuracy under fixed preference settings, they can obscure the true cost-performance trade-offs. High accuracy scores may simply reflect the selection of expensive models rather than intelligent routing decisions, as preference parameters only indirectly control cost through the utility function. This allows routers to achieve misleadingly high performance by consistently choosing costly but high-performing models.

To address these limitations, we recommend focusing primarily on utility prediction evaluation, which reveals the complete cost-performance relationship rather than single-point accuracy. For transparency and comparison with existing literature, we supplement selection-based results with actual cost-performance visualizations that reveal the true trade-offs achieved by different routing methods.

Table~\ref{tab:text_selection} and \ref{tab:vlm_selection} present the utility scores averaged over 3 preference settings for text-based and multi-modal routing benchmarks, respectively.

\begin{table}[h!]
    \centering
    \caption{Utility scores on a range of text routing benchmarks. All methods directly select the optimal routing model without explicitly estimating the utility scores. Scores are averaged over 3 preference settings. Higher is better.}
    \label{tab:text_selection}
    \tiny
    \begin{tabular}{c|ccc|ccc|ccc|c|c}
    \toprule
         & \multicolumn{3}{c|}{AlpacaEval} & \multicolumn{3}{c|}{HELM-Lite} & \multicolumn{3}{c|}{OpenLLM} & RouterBench & Avg \\
         &  OpenAI & Claude & Mistral & OpenAI & Claude & Google & LLaMA3 & Qwen2.5 & Yi1.5 & &\\
    \midrule
        kNN (k=10) & 38.32 & 42.04 & 23.72 & \underline{49.75} & \underline{44.85} & \underline{49.48} & 37.85 & \underline{25.61} & 32.82 & 53.07 & 39.75\\
        kNN (k=100) & 37.89 & 43.99 & 23.88 & 48.93 & 41.86 & 48.63 & 37.56 & 22.61 & 32.09 & 52.36 & 38.98\\
        Linear & \underline{54.61} & \textbf{51.14} & \textbf{30.96} & 48.53 & 42.13 & 48.69 & 38.43 & 24.38 & 32.11 & 52.36 & 42.33\\
        Linear (MF) & 54.54 & \underline{51.14} & \underline{30.96} & 48.59 & 42.57 & 48.73 & 38.57 & 24.82 & 32.15 & 52.36 & 42.44\\
        MLP & 50.81 & 51.14 & 30.52 & 49.30 & \textbf{45.37} & 48.95 & \underline{39.11} & \textbf{26.20} & 33.30 & 53.80 & 42.85\\
        MLP (MF) & \textbf{54.71} & 51.14 & 30.96 & 48.60 & 43.55 & 48.72 & 38.58 & 24.62 & 32.24 & 52.36 & 42.55\\
        Graph (k=10) & 54.52 & 51.14 & 30.95 & 48.84 & 43.11 & 48.55 & 38.49 & 24.72 & 32.12 & 52.73 & 42.52\\
        Graph (k=100) & 54.61 & 51.14 & 30.96 & \textbf{50.47} & 44.62 & \textbf{50.72} & 37.41 & 21.69 & 32.09 & 52.28 & 42.60\\
        Attn (k=10) & 54.08 & 51.14 & 30.91 & 48.45 & 42.08 & 48.04 & 38.86 & 24.85 & 33.76 & 53.74 & 42.59\\
        Attn (k=100) & 54.04 & 51.14 & 30.96 & 49.13 & 44.31 & 49.17 & 38.51 & 23.42 & 33.93 & \underline{54.43} & \underline{42.90}\\
        D-Attn (k=10) & 53.84 & 51.14 & 30.91 & 49.29 & 42.06 & 48.33 & \textbf{39.31} & 25.41 & \textbf{37.24} & \textbf{55.58} & \textbf{43.31}\\
        D-Attn (k=100) & 53.55 & 51.14 & 30.96 & 48.95 & 40.76 & 48.60 & 37.95 & 23.94 & \underline{34.44} & 52.90 & 42.32\\
    \bottomrule
    \end{tabular}
\vspace{-5pt}
\end{table}

\begin{table}[h!]
    \centering
    \caption{Utility scores for vision language benchmark using selection based routing. Higher is better.}
    \label{tab:vlm_selection}
    \tiny
    \begin{tabular}{c|cc|cc|cc|cc|cc|c}
    \toprule
         & \multicolumn{2}{c|}{Blink} & \multicolumn{2}{c|}{Flickr30k} & \multicolumn{2}{c|}{MathVista} & \multicolumn{2}{c|}{MME} & \multicolumn{2}{c|}{MMMU} & Avg\\
         &  OpenAI & Claude & OpenAI & Claude &  OpenAI & Claude &  OpenAI & Claude &  OpenAI & Claude\\
    \midrule
    kNN (k=10) & 60.42 & 71.13 & \textbf{56.88} & 47.50 & 34.48 & 24.69 & 77.41 & 67.74 & 52.08 & 49.53 & 54.19\\
    kNN (k=100) & 54.57 & 71.13 & 56.37 & 47.52 & 23.39 & 24.23 & 73.79 & 67.74 & 46.35 & 49.28 & 51.44\\
    Linear & 57.01 & 71.13 & 56.37 & 51.06 & 23.39 & 24.92 & 72.17 & 70.92 & 46.35 & 49.28 & 52.26\\
    Linear (MF) & 63.96 & 71.13 & 56.37 & 46.42 & 32.60 & 30.24 & 77.95 & 74.59 & 46.73 & 50.40 & 55.04\\
    MLP & \textbf{65.96} & \textbf{73.52} & 56.62 & 51.51 & \textbf{42.46} & 32.82 & 74.01 & 78.24 & \textbf{54.35} & \textbf{52.24} & \textbf{58.17}\\
    MLP (MF) & \underline{65.35} & 71.13 & 56.37 & 46.32 & 23.39 & 33.07 & 75.86 & 76.96 & 46.35 & 49.28 & 54.41\\
    Graph (k=10) & 54.80 & 71.07 & 56.37 & 48.69 & 32.11 & 24.70 & 73.09 & 72.60 & 46.22 & 49.12 & 52.88\\
    Graph (k=100) & 58.06 & 71.70 & 54.68 & 53.99 & 33.63 & 33.06 & 69.17 & 76.47 & 45.74 & \underline{51.84} & 54.83\\
    Attn (k=10) & 62.01 & 71.30 & \underline{56.88} & \textbf{56.42} & 39.10 & 28.21 & \underline{77.56} & \underline{82.39} & \underline{54.22} & 51.53 & \underline{57.96}\\
    Attn (k=100) & 59.34 & \underline{71.46} & 56.65 & 53.39 & \underline{41.61} & \underline{33.87} & 73.54 & 78.89 & 46.12 & 49.52 & 56.44\\
    D-Attn (k=10) & 60.16 & 70.32 & 55.29 & \underline{56.19} & 30.42 & 29.75 & \textbf{78.53} & \textbf{83.75} & 47.71 & 51.27 & 56.34\\
    D-Attn (k=100) & 59.55 & 71.42 & 53.88 & 52.36 & 29.72 & \textbf{48.46} & 73.83 & 78.46 & 49.10 & 50.04 & 56.68\\
    \bottomrule
    \end{tabular}
\vspace{-5pt}
\end{table}

For model selection approaches, there is no straightforward way to obtain the entire cost-quality Pareto front. Instead, we evaluate these routers using three distinct preference settings and report the final utility scores, following standard evaluation protocols in the literature. However, we must emphasize that reported utility scores alone can sometimes present an incomplete picture. The utility function combines both performance and cost in a weighted sum ($s(x,m) - \lambda \cdot c(x,m)$), but this single metric obscures the actual trade-off between these two dimensions. For instance, two routing approaches might achieve similar utility scores through different means—one by selecting higher-performing but costlier models, and another by choosing more cost-efficient models with slightly lower performance. Without visualizing the actual cost-performance points, it becomes difficult to understand these different strategies and their practical implications for deployment scenarios where budget constraints or performance requirements might vary.

To provide a more complete picture of this trade-off, we plot the actual cost-performance relationships in Fig.\ref{fig:text_selection} and Fig.\ref{fig:vlm_selection} for text-based and vision-language model routing benchmarks, respectively. These visualizations reveal the direct relationship between model cost and performance without the abstraction of a combined utility metric. The results demonstrate that simple kNN-based approaches remain highly competitive when compared to more complex routing methods operating under the same cost budget. In many cases, kNN routers achieve comparable or better performance than sophisticated neural architectures while maintaining similar cost efficiency, further supporting our core finding that simple, non-parametric routing methods offer a compelling alternative to complex learned approaches.

Notably, these plots often do not demonstrate a monotonic trend across the three preference settings, highlighting a significant limitation of the model selection formulation: the difficulty in precisely controlling the cost budget through preference parameter adjustment alone. This non-monotonic behavior underscores the challenge of balancing performance and cost when routing decisions are made directly, rather than through explicit utility estimation, and further motivates our parallel investigation of utility prediction approaches that allow for more flexible exploration of the entire Pareto front.

\begin{figure}[h!]
    \centering

    \begin{subfigure}{0.32\linewidth}
        \centering
        \includegraphics[width=\linewidth]{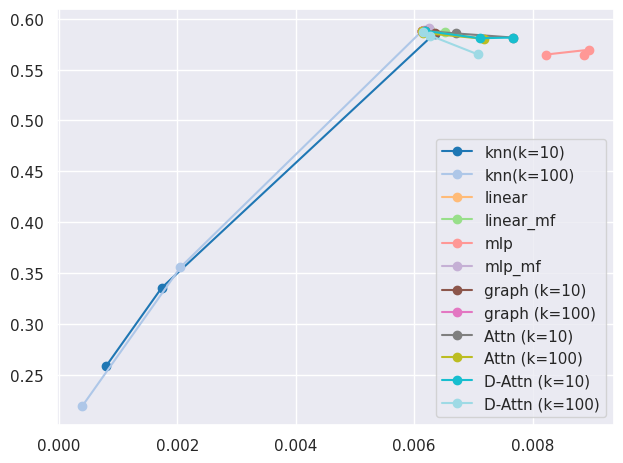}
        \caption{AlpacaEval-OpenAI}
    \end{subfigure}
    \begin{subfigure}{0.32\linewidth}
        \centering
        \includegraphics[width=\linewidth]{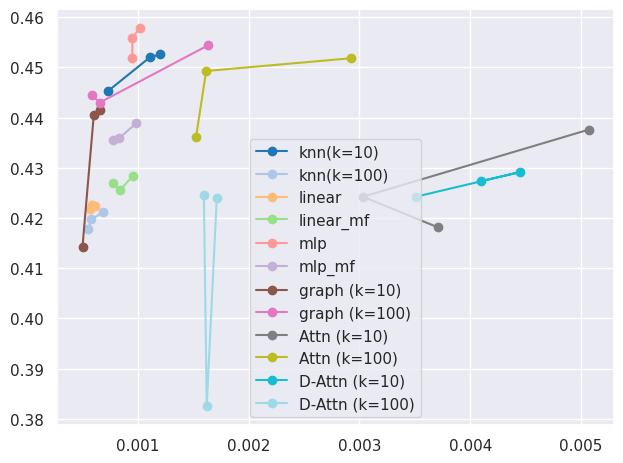}
        \caption{HELMLite-Claude}
    \end{subfigure}
    \begin{subfigure}{0.32\linewidth}
        \centering
        \includegraphics[width=\linewidth]{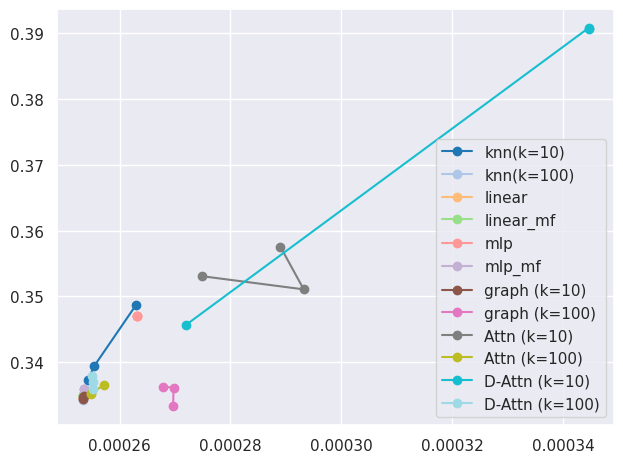}
        \caption{OpenLLM-Yi1.5}
    \end{subfigure}

    \caption{Cost--quality tradeoff for text-based routing benchmarks using model selection approaches.}
    \label{fig:text_selection}
\end{figure}

\begin{figure}[h!]
    \centering
    \small

    \begin{subfigure}{0.32\linewidth}
        \centering
        \includegraphics[width=\linewidth]{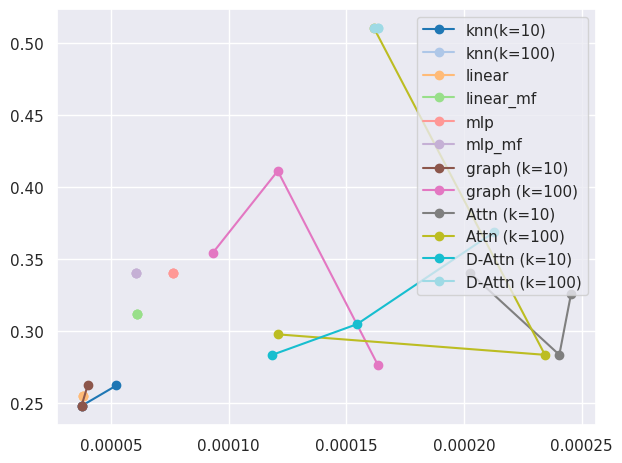}
        \caption{MathVista-Claude}
    \end{subfigure}
    \begin{subfigure}{0.32\linewidth}
        \centering
        \includegraphics[width=\linewidth]{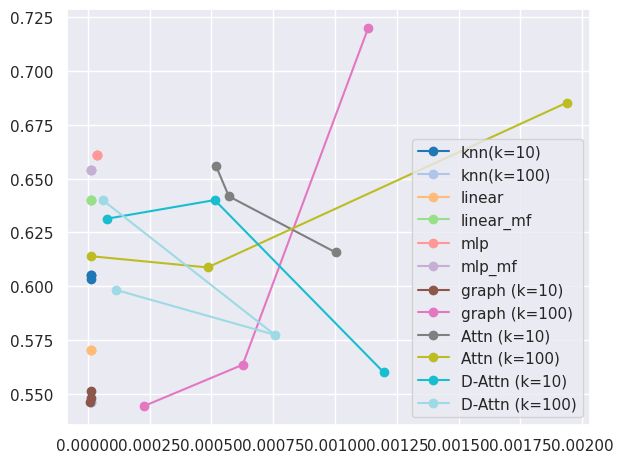}
        \caption{Blink-OpenAI}
    \end{subfigure}
    \begin{subfigure}{0.32\linewidth}
        \centering
        \includegraphics[width=\linewidth]{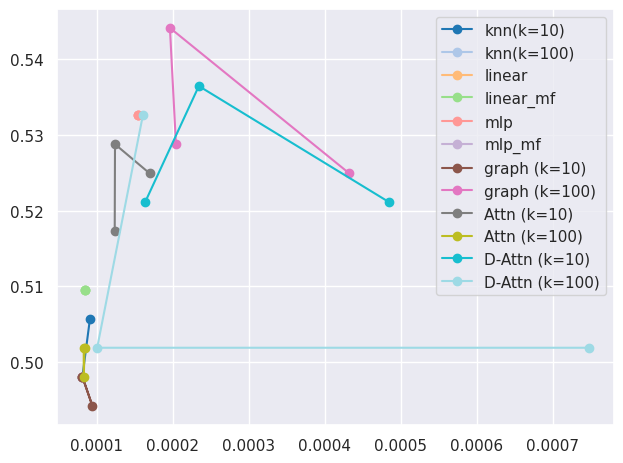}
        \caption{MMMU-Claude}
    \end{subfigure}

    \caption{Cost--quality tradeoff for VLM routing benchmarks using model selection approaches.}
    \label{fig:vlm_selection}
\end{figure}

\section{Detailed Results}\label{sec:appendix_results}
In the main text, we report RouterBench results as averages across all six datasets. For a more granular analysis, Tables~\ref{tab:routerbench_utility} and~\ref{tab:routerbench_selection_details} provide the detailed performance breakdown for each individual dataset under the utility prediction and model selection formulations, respectively. Additionally, Tables~\ref{tab:text_selection_details} and~\ref{tab:vlm_selection_details} present detailed results across three different preference settings for the text-based and multi-modal (VLM) routing benchmarks.

\begin{table}[h!]
    \centering
    \small
    \caption{AUC score on RouterBench using utility prediction routing. Higher is better.}
    \label{tab:routerbench_utility}
    \begin{tabular}{c|cccccc|c}
    \toprule
         &  Arcc & GSM & MBPP & MMLU & Hellaswag & Winogrande & Avg\\
    \midrule
    Oracle & 97.99 & 74.59 & 85.02 & 96.44 & 97.92 & 99.48 & 91.91\\
    Random & 64.67 & 52.47 & 50.74 & 57.41 & 54.74 & 49.55 & 54.93\\
    \midrule
    kNN (k=10) & 88.15 & 63.82 & 60.16 & 73.91 & 87.93 & 71.35 & 74.22\\
    kNN (k=100) & 91.80 & 64.72 & 58.67 & 80.81 & 89.39 & 77.91 & 77.22\\
    Linear & 92.27 & 65.50 & 60.76 & 81.05 & 87.87 & 78.63 & 77.68\\
    Linear (MF) & 91.80 & 64.55 & 60.08 & 80.91 & 88.07 & 78.18 & 77.27\\
    MLP & 91.78 & 65.03 & 59.35 & 80.22 & 87.85 & 78.22 & 77.08\\
    MLP (MF) & 91.70 & 65.38 & 60.05 & 80.94 & 87.85 & 78.50 & 77.40\\
    Graph (k=10) & 91.70 & 62.23 & 56.36 & 80.76 & 87.89 & 78.52 & 76.24\\
    Graph (k=100) & 91.39 & 62.76 & 59.38 & 80.88 & 87.89 & 78.93 & 76.87\\
    Attn (k=10) & 89.49 & 62.08 & 56.36 & 72.12 & 87.88 & 73.26 & 73.53\\
    Attn (k=100) & 91.63 & 64.07 & 60.85 & 81.01 & 87.86 & 78.82 & 77.37\\
    D-Attn (k=10) & 89.43 & 62.51 & 58.67 & 74.15 & 87.91 & 73.72 & 74.40\\
    D-Attn (k=100) & 91.67 & 64.48 & 60.14 & 80.71 & 89.40 & 78.46 & 77.48\\
    \bottomrule
    \end{tabular}
\end{table}

\begin{table}[h!]
    \centering
    \small
    \caption{Utility score on RouterBench using selection based routing. Higher is better.}
    \label{tab:routerbench_selection_details}
    \begin{tabular}{cc|cccccc|c}
    \toprule
         & & Arcc & GSM & MBPP & MMLU & Hellaswag & Winogrande & Avg \\
    \midrule
        \multirow{12}{*}{\rotatebox{90}{High-Performance}} & kNN (k=10) & 64.75 & 50.30 & 36.41 & 55.31 & 59.74 & 51.95 & 53.08\\
         & kNN (k=100) & 64.53 & 50.89 & 33.32 & 53.39 & 59.74 & 52.49 & 52.39\\
         & Linear & 64.53 & 50.86 & 33.32 & 53.39 & 59.74 & 52.49 & 52.39\\
         & Linear (MF) & 64.53 & 50.88 & 33.32 & 53.39 & 59.74 & 52.49 & 52.39\\
         & MLP & 64.53 & 51.40 & 42.60 & 53.39 & 59.74 & 52.47 & 54.02\\
         & MLP (MF) & 64.53 & 50.86 & 33.32 & 53.39 & 59.74 & 52.49 & 52.39\\
         & Graph (k=10) & 64.53 & 50.90 & 32.54 & 53.39 & 59.74 & 53.27 & 52.40\\
         & Graph (k=100) & 65.95 & 50.90 & 31.73 & 54.76 & 59.74 & 57.21 & 53.38\\
         & Attn (k=10) & 67.32 & 50.64 & 37.85 & 56.49 & 59.74 & 54.03 & 54.35\\
         & Attn (k=100) & 64.35 & 51.18 & 41.57 & 55.80 & 59.74 & 36.20 & 51.47\\
         & D-Attn (k=10) & 66.35 & 44.22 & 35.53 & 46.06 & 59.74 & 50.64 & 50.42\\
         & D-Attn (k=100) & 65.13 & 51.05 & 32.51 & 54.77 & 25.57 & 54.02 & 47.18\\
    \midrule
        \multirow{12}{*}{\rotatebox{90}{Balanced}} & kNN (k=10) & 64.53 & 50.23 & 36.31 & 55.06 & 59.74 & 52.49 & 53.06\\
         & kNN (k=100) & 64.53 & 50.78 & 33.25 & 53.39 & 59.74 & 52.49 & 52.36\\
         & Linear & 64.53 & 50.73 & 33.25 & 53.39 & 59.74 & 52.49 & 52.36\\
         & Linear (MF) & 64.53 & 50.74 & 33.25 & 53.39 & 59.74 & 52.49 & 52.36\\
         & MLP & 64.53 & 51.01 & 42.46 & 53.39 & 59.74 & 52.38 & 53.92\\
         & MLP (MF) & 64.53 & 50.76 & 33.25 & 53.39 & 59.74 & 52.49 & 52.36\\
         & Graph (k=10) & 64.53 & 50.78 & 34.02 & 53.39 & 59.74 & 58.34 & 53.47\\
         & Graph (k=100) & 65.38 & 41.45 & 29.23 & 53.39 & 59.74 & 53.58 & 50.46\\
         & Attn (k=10) & 67.40 & 50.12 & 33.94 & 56.00 & 59.74 & 51.91 & 53.19\\
         & Attn (k=100) & 60.53 & 50.73 & 41.61 & 55.58 & 84.10 & 51.49 & 57.34\\
         & D-Attn (k=10) & 84.52 & 49.94 & 58.29 & 52.09 & 84.10 & 52.06 & 63.50\\
         & D-Attn (k=100) & 73.40 & 50.54 & 33.11 & 54.44 & 54.37 & 54.42 & 53.38\\
    \midrule
        \multirow{12}{*}{\rotatebox{90}{Low-Cost}} & kNN (k=10) & 64.52 & 50.45 & 36.19 & 54.98 & 59.74 & 52.48 & 53.06\\
         & kNN (k=100) & 64.52 & 50.63 & 33.17 & 53.39 & 59.74 & 52.48 & 52.32\\
         & Linear & 64.52 & 50.59 & 33.17 & 53.39 & 59.74 & 52.48 & 52.32\\
         & Linear (MF) & 64.52 & 50.59 & 33.17 & 53.39 & 59.74 & 52.48 & 52.32\\
         & MLP & 64.52 & 50.85 & 39.96 & 53.39 & 59.74 & 52.27 & 53.46\\
         & MLP (MF) & 64.52 & 50.60 & 33.17 & 53.39 & 59.74 & 52.48 & 52.32\\
         & Graph (k=10) & 64.52 & 50.63 & 33.17 & 53.39 & 59.74 & 52.47 & 52.32\\
         & Graph (k=100) & 64.80 & 50.63 & 32.94 & 53.39 & 59.74 & 56.43 & 52.99\\
         & Attn (k=10) & 64.52 & 49.12 & 36.63 & 56.35 & 59.74 & 55.65 & 53.67\\
         & Attn (k=100) & 67.30 & 50.56 & 37.67 & 53.33 & 59.74 & 58.24 & 54.47\\
         & D-Attn (k=10) & 64.76 & 49.57 & 37.19 & 53.50 & 59.74 & 52.21 & 52.83\\
         & D-Attn (k=100) & 63.64 & 59.87 & 56.93 & 53.91 & 59.74 & 54.80 & 58.15\\
    \bottomrule
    \end{tabular}
\end{table}

\begin{table}[h!]
    \centering
    \caption{Utility scores on a range of text routing benchmarks. All methods directly select the optimal routing model without explicitly estimating the utility scores. Higher is better.}
    \label{tab:text_selection_details}
    \scriptsize
    \begin{tabular}{cc|ccc|ccc|ccc}
    \toprule
         & & \multicolumn{3}{c|}{AlpacaEval} & \multicolumn{3}{c|}{HELM-Lite} & \multicolumn{3}{c}{OpenLLM} \\
         & &  OpenAI & Claude & Mistral & OpenAI & Claude & Google & LLaMA3 & Qwen2.5 & Yi1.5\\
    \midrule
        \multirow{12}{*}{\rotatebox{90}{High-Performance}} & kNN (k=10) & 57.72 & 53.38 & 27.61 & 49.83 & 45.23 & 49.56 & 39.81 & 32.12 & 34.61\\
        & kNN (k=100) & 58.03 & 53.38 & 26.81 & 49.00 & 42.09 & 48.69 & 39.30 & 25.36 & 33.19\\
        & Linear & 58.01 & 53.38 & 33.24 & 48.57 & 42.23 & 48.76 & 39.03 & 24.73 & 33.20\\
        & Linear (MF) & 57.85 & 53.38 & 33.29 & 48.66 & 42.81 & 48.87 & 39.17 & 25.19 & 33.25\\
        & MLP & 55.33 & 53.38 & 33.22 & 49.38 & 45.76 & 48.93 & 39.78 & 26.65 & 34.44\\
        & MLP (MF) & 58.29 & 53.38 & 33.26 & 48.69 & 43.86 & 48.83 & 39.19 & 24.99 & 33.34\\
        & Graph (k=10) & 57.82 & 53.38 & 33.21 & 48.86 & 41.42 & 48.59 & 39.34 & 24.35 & 33.19\\
        & Graph (k=100) & 58.03 & 53.38 & 33.23 & 48.65 & 45.39 & 51.10 & 37.41 & 21.57 & 33.36\\
        & Attn (k=10) & 57.16 & 53.38 & 33.27 & 49.06 & 43.61 & 48.85 & 40.20 & 24.95 & 35.46\\
        & Attn (k=100) & 58.03 & 53.38 & 33.23 & 49.47 & 45.09 & 49.94 & 38.97 & 21.57 & 33.40\\
        & D-Attn (k=10) & 57.16 & 53.38 & 33.27 & 47.84 & 42.61 & 48.45 & 39.95 & 26.89 & 38.72\\
        & D-Attn (k=100) & 55.57 & 53.38 & 33.23 & 47.90 & 42.42 & 49.34 & 39.33 & 22.43 & 33.45\\
    \midrule
        \multirow{12}{*}{\rotatebox{90}{Balanced}} & kNN (k=10) & 32.41 & 44.93 & 23.23 & 49.76 & 45.03 & 49.45 & 37.31 & 23.15 & 32.66\\
        & kNN (k=100) & 34.23 & 51.11 & 25.34 & 48.92 & 41.88 & 48.64 & 37.09 & 21.63 & 32.16\\
        & Linear & 54.88 & 51.31 & 30.98 & 48.53 & 42.08 & 48.68 & 38.48 & 24.40 & 32.19\\
        & Linear (MF) & 54.85 & 51.31 & 30.97 & 48.58 & 42.42 & 48.72 & 38.61 & 24.85 & 32.24\\
        & MLP & 51.19 & 51.31 & 30.29 & 49.24 & 45.45 & 49.04 & 39.16 & 26.23 & 33.39\\
        & MLP (MF) & 54.91 & 51.31 & 30.98 & 48.58 & 43.46 & 48.70 & 38.63 & 24.65 & 32.33\\
        & Graph (k=10) & 54.88 & 51.31 & 30.98 & 48.87 & 43.95 & 48.55 & 38.50 & 25.69 & 32.22\\
        & Graph (k=100) & 54.88 & 51.31 & 30.98 & 50.14 & 44.20 & 51.21 & 38.80 & 21.82 & 32.26\\
        & Attn (k=10) & 54.24 & 51.31 & 30.98 & 48.05 & 41.96 & 47.75 & 38.47 & 25.11 & 33.64\\
        & Attn (k=100) & 53.39 & 51.31 & 30.98 & 48.88 & 44.68 & 48.77 & 38.66 & 24.62 & 32.25\\
        & D-Attn (k=10) & 53.51 & 51.31 & 30.98 & 48.04 & 42.23 & 48.42 & 39.64 & 25.09 & 37.36\\
        & D-Attn (k=100) & 54.31 & 51.31 & 30.98 & 49.71 & 38.01 & 47.86 & 38.46 & 24.97 & 32.32\\
    \midrule
        \multirow{12}{*}{\rotatebox{90}{Low-Cost}} & kNN (k=10) & 24.83 & 27.82 & 20.31 & 49.66 & 44.30 & 49.42 & 36.44 & 21.56 & 31.20\\
        & kNN (k=100) & 21.41 & 27.47 & 19.49 & 48.88 & 41.60 & 48.57 & 36.29 & 20.84 & 30.91\\
        & Linear & 50.93 & 48.72 & 28.66 & 48.49 & 42.08 & 48.62 & 37.78 & 24.00 & 30.93\\
        & Linear (MF) & 50.93 & 48.72 & 28.62 & 48.53 & 42.47 & 48.61 & 37.92 & 24.43 & 30.97\\
        & MLP & 45.91 & 48.72 & 28.06 & 49.28 & 44.90 & 48.87 & 38.40 & 25.71 & 32.08\\
        & MLP (MF) & 50.93 & 48.72 & 28.63 & 48.52 & 43.32 & 48.63 & 37.93 & 24.23 & 31.06\\
        & Graph (k=10) & 50.85 & 48.72 & 28.66 & 48.79 & 43.95 & 48.50 & 37.62 & 24.11 & 30.94\\
        & Graph (k=100) & 50.93 & 48.72 & 28.66 & 52.62 & 44.28 & 49.85 & 36.02 & 21.69 & 30.64\\
        & Attn (k=10) & 50.85 & 48.72 & 28.49 & 48.24 & 40.68 & 47.51 & 37.90 & 24.50 & 32.18\\
        & Attn (k=100) & 50.69 & 48.72 & 28.66 & 49.03 & 43.16 & 48.79 & 37.89 & 24.08 & 36.13\\
        & D-Attn (k=10) & 50.85 & 48.72 & 28.49 & 52.00 & 41.35 & 48.13 & 38.35 & 24.25 & 35.63\\
        & D-Attn (k=100) & 50.77 & 48.72 & 28.66 & 49.24 & 41.86 & 48.60 & 36.07 & 24.41 & 37.56\\
    \bottomrule
    \end{tabular}
\end{table}

\begin{table}[h!]
    \centering
    \caption{Utility scores for vision language benchmark using selection based routing. Higher is better.}
    \label{tab:vlm_selection_details}
    \tiny
    \begin{tabular}{cc|cc|cc|cc|cc|cc}
    \toprule
         & & \multicolumn{2}{c|}{Blink} & \multicolumn{2}{c|}{Flickr30k} & \multicolumn{2}{c|}{MathVista} & \multicolumn{2}{c|}{MME} & \multicolumn{2}{c}{MMMU}\\
         & & OpenAI & Claude & OpenAI & Claude &  OpenAI & Claude &  OpenAI & Claude &  OpenAI & Claude\\
    \midrule
        \multirow{12}{*}{\rotatebox{90}{High-Performance}} & kNN (k=10) & 60.51 & 72.26 & 57.11 & 49.31 & 37.58 & 26.09 & 78.23 & 69.50 & 52.10 & 50.46\\
        & kNN (k=100) & 54.60 & 72.26 & 56.45 & 49.57 & 23.40 & 24.71 & 74.76 & 69.50 & 46.36 & 49.71\\
        & Linear & 57.04 & 72.26 & 56.45 & 53.03 & 23.40 & 25.42 & 72.21 & 72.92 & 46.36 & 49.71\\
        & Linear (MF) & 63.99 & 72.26 & 56.45 & 46.64 & 32.62 & 31.02 & 78.00 & 76.99 & 46.74 & 50.85\\
        & MLP & 66.06 & 75.68 & 57.73 & 54.64 & 42.54 & 33.81 & 74.06 & 81.25 & 54.40 & 53.07\\
        & MLP (MF) & 65.38 & 72.26 & 56.45 & 47.54 & 23.40 & 33.86 & 75.91 & 79.69 & 46.36 & 49.71\\
        & Graph (k=10) & 54.60 & 72.26 & 56.45 & 46.50 & 34.03 & 26.12 & 73.83 & 73.37 & 46.36 & 49.71\\
        & Graph (k=100) & 54.30 & 73.58 & 48.02 & 61.94 & 25.47 & 27.17 & 69.67 & 79.13 & 45.15 & 51.96\\
        & Attn (k=10) & 60.98 & 72.60 & 56.51 & 62.30 & 35.37 & 31.89 & 77.25 & 88.02 & 53.96 & 52.28\\
        & Attn (k=100) & 61.38 & 71.38 & 56.06 & 54.94 & 43.92 & 50.58 & 67.66 & 83.94 & 47.12 & 50.09\\
        & D-Attn (k=10) & 55.30 & 71.75 & 51.62 & 61.90 & 41.01 & 36.24 & 55.73 & 88.02 & 48.48 & 51.51\\
        & D-Attn (k=100) & 63.96 & 72.56 & 48.02 & 52.66 & 29.77 & 50.58 & 72.36 & 85.73 & 45.58 & 49.27\\
    \midrule
        \multirow{12}{*}{\rotatebox{90}{Balanced}} & kNN (k=10) & 60.48 & 71.22 & 56.97 & 47.15 & 35.42 & 24.26 & 77.03 & 67.87 & 52.08 & 49.31\\
        & kNN (k=100) & 54.58 & 71.22 & 56.37 & 47.29 & 23.39 & 24.26 & 73.33 & 67.87 & 46.35 & 49.31\\
        & Linear & 57.01 & 71.22 & 56.37 & 51.53 & 23.39 & 24.96 & 72.17 & 71.07 & 46.35 & 49.31\\
        & Linear (MF) & 63.96 & 71.22 & 56.37 & 45.48 & 32.60 & 30.30 & 77.95 & 74.78 & 46.73 & 50.44\\
        & MLP & 65.97 & 73.68 & 56.77 & 51.59 & 42.47 & 32.90 & 74.02 & 78.47 & 54.35 & 52.31\\
        & MLP (MF) & 65.35 & 71.22 & 56.37 & 46.06 & 23.39 & 33.13 & 75.87 & 77.17 & 46.35 & 49.31\\
        & Graph (k=10) & 55.10 & 71.03 & 56.37 & 53.52 & 29.05 & 24.26 & 71.71 & 75.01 & 45.96 & 48.84\\
        & Graph (k=100) & 54.52 & 71.61 & 58.28 & 49.86 & 33.77 & 39.33 & 70.76 & 78.30 & 45.21 & 53.20\\
        & Attn (k=10) & 62.51 & 70.98 & 57.25 & 56.62 & 44.50 & 24.77 & 78.27 & 79.73 & 48.40 & 52.11\\
        & Attn (k=100) & 59.45 & 72.36 & 57.09 & 56.76 & 40.86 & 24.86 & 75.87 & 78.80 & 46.00 & 49.30\\
        & D-Attn (k=10) & 62.50 & 71.85 & 56.71 & 56.55 & 26.45 & 28.19 & 90.22 & 83.60 & 48.79 & 52.20\\
        & D-Attn (k=100) & 55.53 & 71.28 & 56.90 & 56.63 & 36.05 & 48.62 & 69.26 & 78.19 & 52.68 & 49.57\\
    \midrule
        \multirow{12}{*}{\rotatebox{90}{Low-Cost}} & kNN (k=10) & 60.27 & 69.92 & 56.55 & 46.03 & 30.43 & 23.71 & 76.97 & 65.84 & 52.05 & 48.81\\
        & kNN (k=100) & 54.54 & 69.92 & 56.28 & 45.69 & 23.38 & 23.71 & 73.27 & 65.84 & 46.34 & 48.81\\
        & Linear & 56.98 & 69.92 & 56.28 & 48.62 & 23.38 & 24.39 & 72.12 & 68.76 & 46.33 & 48.81\\
        & Linear (MF) & 63.93 & 69.92 & 56.28 & 47.14 & 32.57 & 29.39 & 77.90 & 72.01 & 46.71 & 49.92\\
        & MLP & 65.86 & 71.19 & 55.35 & 48.31 & 42.38 & 31.76 & 73.96 & 74.99 & 54.30 & 51.35\\
        & MLP (MF) & 65.31 & 69.92 & 56.28 & 45.35 & 23.38 & 32.23 & 75.81 & 74.02 & 46.34 & 48.81\\
        & Graph (k=10) & 54.71 & 69.92 & 56.28 & 46.05 & 33.25 & 23.71 & 73.73 & 69.41 & 46.34 & 48.81\\
        & Graph (k=100) & 65.37 & 69.92 & 57.73 & 50.17 & 41.66 & 32.67 & 67.08 & 71.99 & 46.86 & 50.36\\
        & Attn (k=10) & 62.53 & 70.31 & 56.88 & 50.33 & 37.42 & 27.98 & 77.17 & 79.41 & 60.31 & 50.20\\
        & Attn (k=100) & 57.20 & 70.65 & 56.81 & 48.47 & 40.05 & 26.17 & 77.10 & 73.93 & 45.24 & 49.16\\
        & D-Attn (k=10) & 62.69 & 67.35 & 57.54 & 50.12 & 23.80 & 24.83 & 89.63 & 79.62 & 45.85 & 50.09\\
        & D-Attn (k=100) & 59.16 & 70.43 & 56.71 & 47.78 & 23.35 & 46.17 & 79.86 & 71.47 & 49.04 & 51.28\\
    \bottomrule
    \end{tabular}
\end{table}

\section{Scalability Analysis}
\label{app:scalability}

A critical practical consideration for deploying kNN routing in production 
environments is scalability. We provide comprehensive analysis of memory 
requirements, index construction costs, and retrieval latency across different 
scales. Our measurements demonstrate that kNN routing scales efficiently to 
production scenarios with millions of queries.

\subsection{Memory Requirements}

kNN routing requires storing the support set (query embeddings and their 
associated performance scores) in memory. Table~\ref{tab:memory} shows memory 
consumption across our benchmarks and extrapolated projections for larger scales.

\begin{table}[!h]
\centering
\caption{Memory requirements for kNN routing across different support set sizes. 
Measurements based on 768-dimensional BERT embeddings with float32 precision.}
\label{tab:memory}
\begin{tabular}{@{}lrrr@{}}
\toprule
\textbf{Dataset/Scale} & \textbf{Queries} & \textbf{Memory} & \textbf{Per Query} \\
\midrule
\multicolumn{4}{l}{\emph{Actual Benchmarks}} \\
AlpacaEval & 563 & 1.7 MB & 3.0 KB \\
HELM-Lite & 9,107 & 27.3 MB & 3.0 KB \\
Open LLM & 15,117 & 45.4 MB & 3.0 KB \\
RouterBench (total) & 3,928 & 11.8 MB & 3.0 KB \\
\midrule
\multicolumn{4}{l}{\emph{Extrapolated Scales}} \\
Medium scale & 100,000 & 300 MB & 3.0 KB \\
Large scale & 1,000,000 & 3.0 GB & 3.0 KB \\
Very large scale & 10,000,000 & 30 GB & 3.0 KB \\
\bottomrule
\end{tabular}
\end{table}

\paragraph{Memory calculation.} For each query, we store: (1) embedding vector 
(768 dimensions $\times$ 4 bytes = 3,072 bytes), (2) performance scores for 
$M$ models ($M \times$ 4 bytes), and (3) cost values for $M$ models 
($M \times$ 4 bytes). For typical routing scenarios with $M \approx 10$ models, 
this totals approximately 3.0 KB per query.

\paragraph{Practical implications.} Memory requirements scale linearly with 
support set size at approximately 3 KB per query. Even at very large scales 
(10M queries), the 30 GB memory requirement is manageable on modern servers. 
For comparison, parametric routing methods eliminate the support set but require 
storing model parameters---a Linear router with 768-dimensional input and 10 
models requires approximately 30 KB of parameters, while an MLP router with 
100-dimensional hidden layers requires approximately 160 KB. However, parametric 
methods also require storing training data for periodic retraining, partially 
offsetting their memory advantage.

\subsection{Index Construction Time}

kNN routing requires building a nearest neighbor index as a one-time setup cost. 
We measure index construction time using ScaNN, a 
state-of-the-art approximate nearest neighbor library, with default parameters 
optimized for retrieval speed.

\begin{table}[!h]
\centering
\caption{Index construction time for kNN routing. All measurements performed on 
a single Intel Xeon Platinum 8275CL CPU core.}
\label{tab:index_build}
\begin{tabular}{@{}lrrr@{}}
\toprule
\textbf{Dataset/Scale} & \textbf{Queries} & \textbf{Build Time} & \textbf{Per Query} \\
\midrule
\multicolumn{4}{l}{\emph{Actual Benchmarks}} \\
AlpacaEval & 563 & 0.22 s & 0.39 ms \\
HELM-Lite & 9,107 & 2.09 s & 0.23 ms \\
Open LLM & 15,117 & 4.22 s & 0.28 ms \\
RouterBench (avg) & 655 & 0.18 s & 0.27 ms \\
\midrule
\multicolumn{4}{l}{\emph{Extrapolated Scales}} \\
Medium scale & 100,000 & 28 s & 0.28 ms \\
Large scale & 1,000,000 & 4.7 min & 0.28 ms \\
Very large scale & 10,000,000 & 47 min & 0.28 ms \\
\bottomrule
\end{tabular}
\end{table}

\paragraph{Practical implications.} Index construction is fast and scales 
linearly at approximately 0.28 ms per query. Even for very large support sets 
(10M queries), index construction completes in under an hour. This one-time 
cost is negligible compared to training parametric models, which typically 
requires hours to days. More importantly, kNN indices support \emph{incremental 
updates}---adding new examples requires only milliseconds, while parametric 
methods require complete retraining.

\section{Latency Analysis}
In this section, we analyze the latency of each routing approach on Routerbench. Table~\ref{tab:latency} reports the inference time required to route all examples in each test set. This analysis provides insights into the computational efficiency of different routing approaches, which is an important consideration for practical deployment.

We measured the total processing time for each routing method across all datasets, excluding the one-time costs of building nearest neighbor indices and training predictive models. For a fair comparison, CPU-based methods (kNN, Linear, and MLP predictors) were run on an Intel(R) Xeon(R) Platinum 8275CL processor, while GPU-accelerated methods (Graph and attention-based predictors) were executed on an NVIDIA A100 GPU. The kNN predictor leverages ScaNN for efficient nearest neighbor search, enabling fast retrieval even with larger support sets. Linear and MLP predictors are implemented using the sklearn package.

The results demonstrate that simpler methods offer substantial computational advantages. kNN-based approaches are remarkably efficient, with kNN (k=100) requiring only 10.95 seconds on average per dataset, making it the fastest method overall. Simple parametric models (Linear and MLP variants) show moderate processing times (14-19 seconds). In contrast, graph and attention-based methods are approximately 13-14× slower than kNN, with average processing times exceeding 144 seconds per dataset.

These findings further strengthen our argument for simpler routing approaches, highlighting that kNN not only matches or exceeds the routing performance of more complex methods but also offers significant advantages in computational efficiency. For large-scale deployment scenarios where thousands of routing decisions may be needed per second, these latency differences become particularly important.

\begin{table}[!h]
\centering
\caption{The cumulative time to process the each test set.}
\label{tab:latency}
\small
\begin{tabular}{c|cccccc|cc}
\toprule
  & arcc & gsm & hellaswag & mbpp & mmlu & winogrande & AVG & SUM\\
\midrule
\textbf{kNN (k=10)} & 2.48 & 12.81 & 18.64 & 0.74 & 37.51 & 3.58 & 12.62 & 75.76\\
\textbf{kNN (k=100)} & 2.69 & 13.75 & 19.37 & 0.80 & 26.73 & 2.35 & 10.95 & 65.69\\
\textbf{Linear} & 3.56 & 18.30 & 24.27 & 1.07 & 34.15 & 3.16 & 14.09 & 84.51\\
\textbf{Linear (MF)} & 4.63 & 22.99 & 20.07 & 1.60 & 43.69 & 2.73 & 15.95 & 95.71\\
\textbf{MLP} & 4.25 & 35.76 & 28.34 & 1.23 & 43.29 & 3.57 & 19.41 & 116.44\\
\textbf{MLP (MF)} & 4.86 & 15.63 & 20.99 & 1.20 & 45.51 & 4.33 & 15.42 & 92.52\\
\textbf{Graph (k=10)} & 32.37 & 186.73 & 250.98 & 8.78 & 359.70 & 27.78 & 144.39 & 866.34\\
\textbf{Graph (k=100)} & 33.51 & 186.48 & 253.32 & 8.82 & 361.71 & 28.19 & 145.34 & 872.03\\
\textbf{Attn (k=10)} & 32.80 & 186.96 & 256.73 & 9.08 & 363.43 & 28.64 & 146.27 & 877.64\\
\textbf{Attn (k=100)} & 33.68 & 189.76 & 256.94 & 8.94 & 365.37 & 28.62 & 147.22 & 883.31\\
\textbf{D-Attn (k=10)} & 33.92 & 193.12 & 264.91 & 9.30 & 375.58 & 28.68 & 150.92 & 905.51\\
\textbf{D-Attn (k=100)} & 33.91 & 194.28 & 264.21 & 9.04 & 375.98 & 28.90 & 151.05 & 906.32\\
\bottomrule
\end{tabular}
\end{table}

\section{Out-of-Distribution Queries}
To evaluate the generalization ability of routing models under distribution shift, we conduct a comprehensive cross-dataset evaluation. For each of the six datasets in RouterBench, we train models on one dataset and evaluate on all others, measuring AUC scores as our primary metric. This results in 36 (train, test) pairs per model.

We visualize the results in a test-centric manner: for each test dataset, we compare in-distribution (ID) performance—when the train and test datasets are identical—with out-of-distribution (OOD) performance, where models are trained on different datasets. Figure~\ref{fig:ood_analysis} presents the AUC scores for each test dataset across different training datasets, while Table~\ref{tab:ood_analysis} reports the ID and OOD performance averaged across all six test datasets.

As expected, models consistently achieve higher AUC scores when evaluated in-distribution. However, the degree of generalization to OOD test sets varies considerably by model architecture and training dataset. For instance, models trained on \texttt{hellaswag} generalize well to \texttt{mmlu}, but perform poorly on \texttt{mbpp}. Conversely, models trained on \texttt{gsm} demonstrate strong performance on \texttt{mbpp}, while struggling with \texttt{mmlu}. We attribute these patterns to varying degrees of domain shift between the datasets.

On average, all routing approaches show performance degradation when evaluating on OOD queries, though kNN-based approaches exhibit more moderate decreases compared to their more complex counterparts. The simple kNN (k=100) model shows the smallest performance drop (2.63 points), while Linear (MF) experiences the largest degradation (6.67 points). These findings highlight that simpler models may be more robust to distribution shifts in routing tasks.

Overall, no single model universally dominates across all cross-dataset evaluations, emphasizing the importance of data diversity and robustness-aware model design for effective routing decisions.

\begin{figure}[!h]
    \centering

    \begin{subfigure}{0.48\linewidth}
        \centering
        \includegraphics[width=\linewidth]{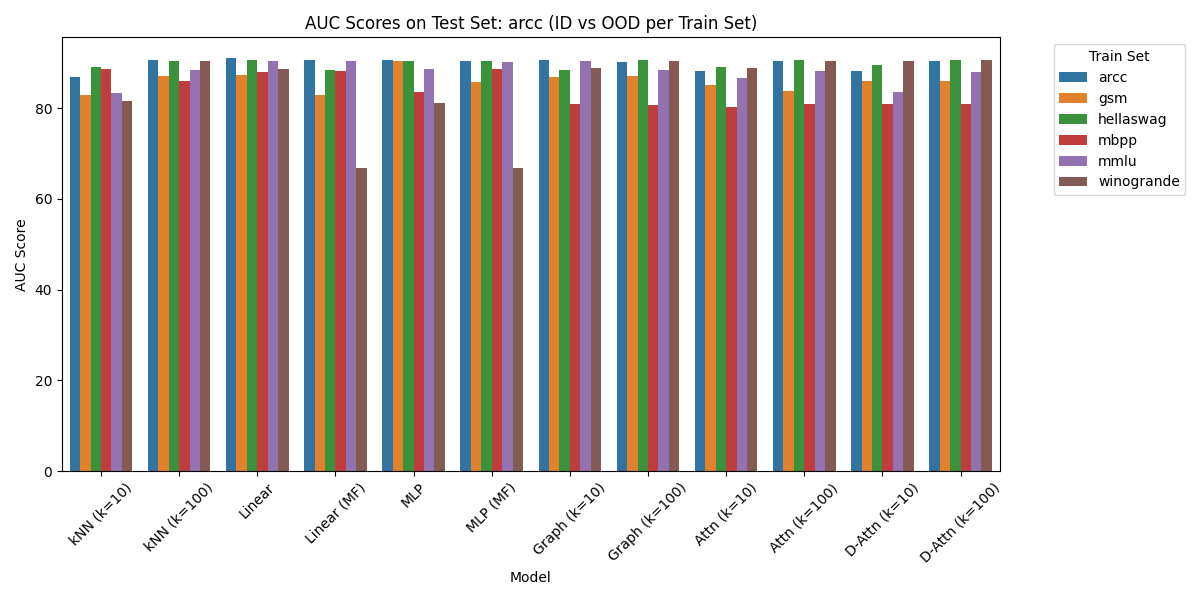}
        \caption{arcc}
    \end{subfigure}
    \begin{subfigure}{0.48\linewidth}
        \centering
        \includegraphics[width=\linewidth]{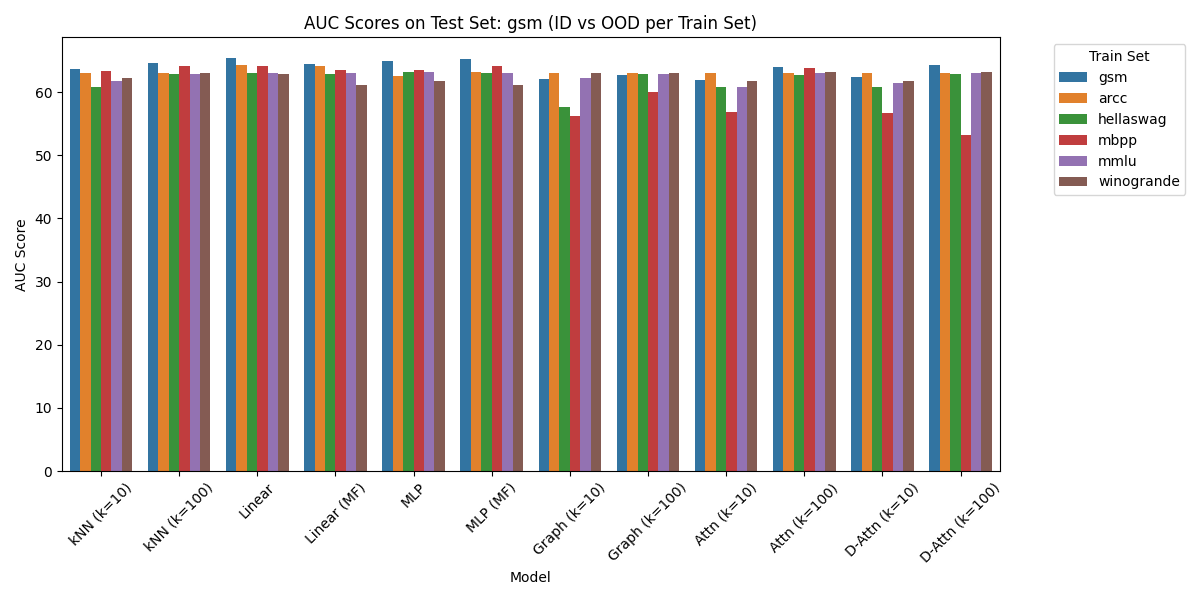}
        \caption{gsm}
    \end{subfigure}

    \begin{subfigure}{0.48\linewidth}
        \centering
        \includegraphics[width=\linewidth]{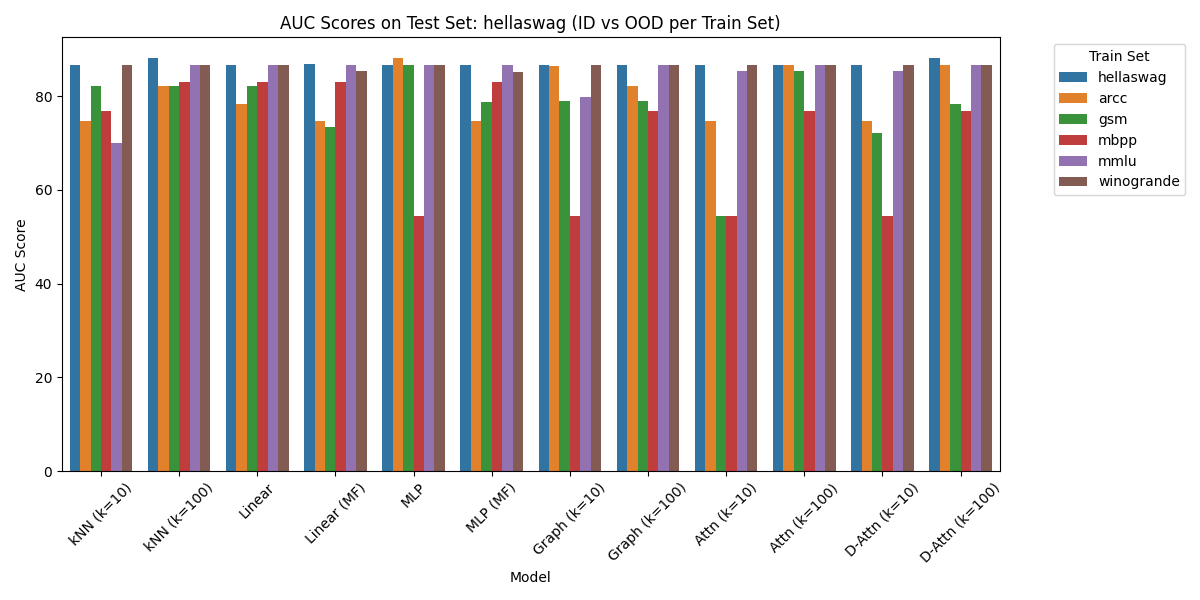}
        \caption{hellaswag}
    \end{subfigure}
    \begin{subfigure}{0.48\linewidth}
        \centering
        \includegraphics[width=\linewidth]{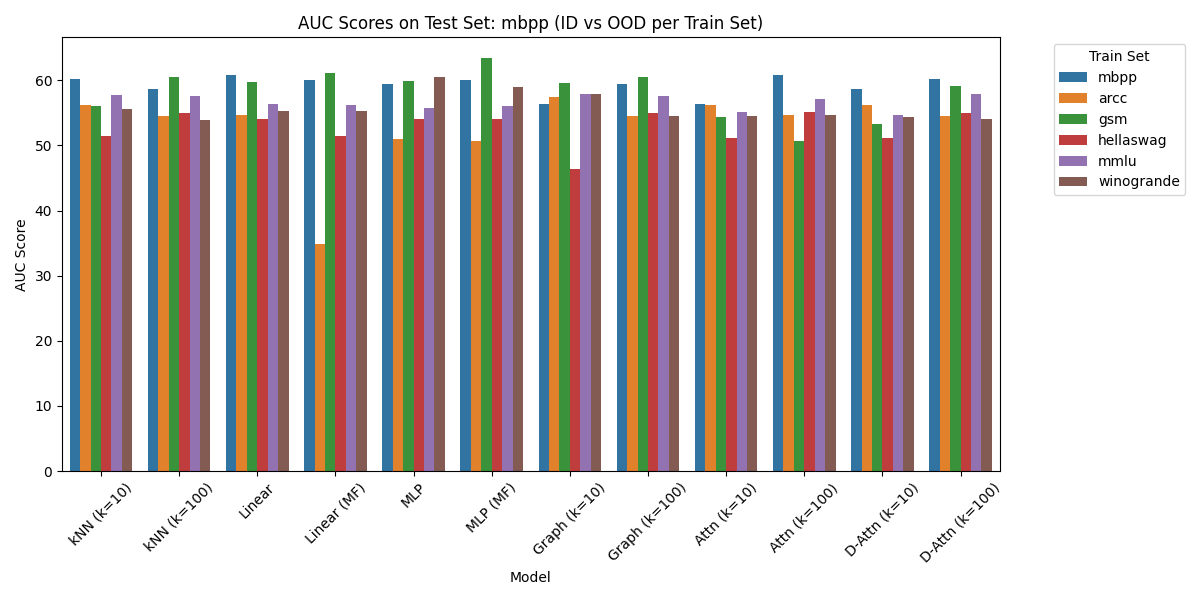}
        \caption{mbpp}
    \end{subfigure}

    \begin{subfigure}{0.48\linewidth}
        \centering
        \includegraphics[width=\linewidth]{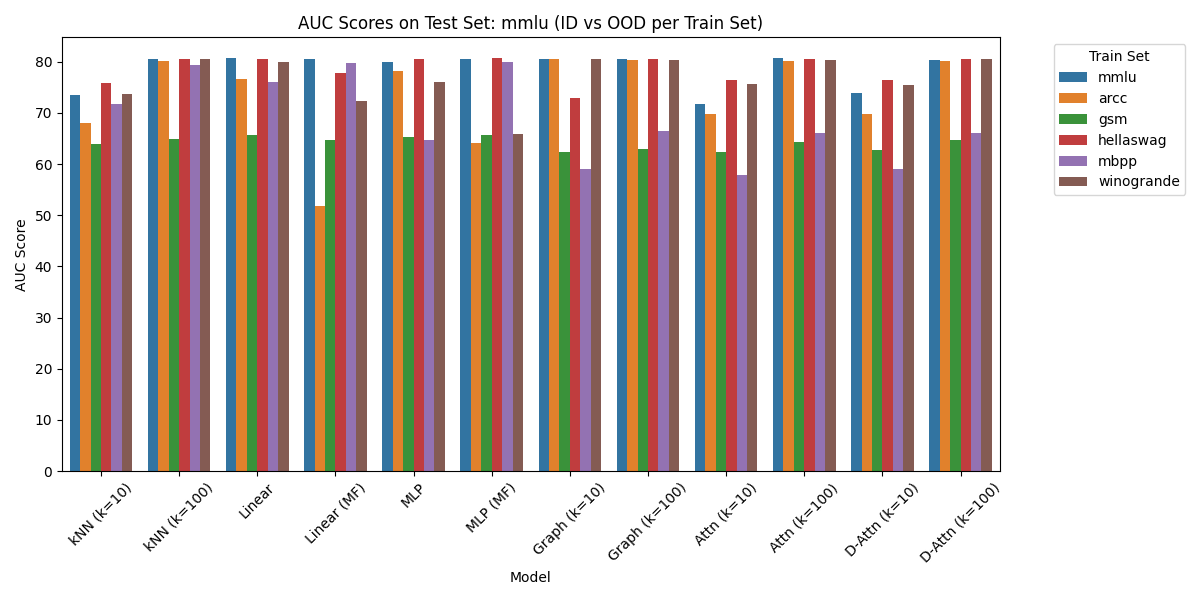}
        \caption{mmlu}
    \end{subfigure}
    \begin{subfigure}{0.48\linewidth}
        \centering
        \includegraphics[width=\linewidth]{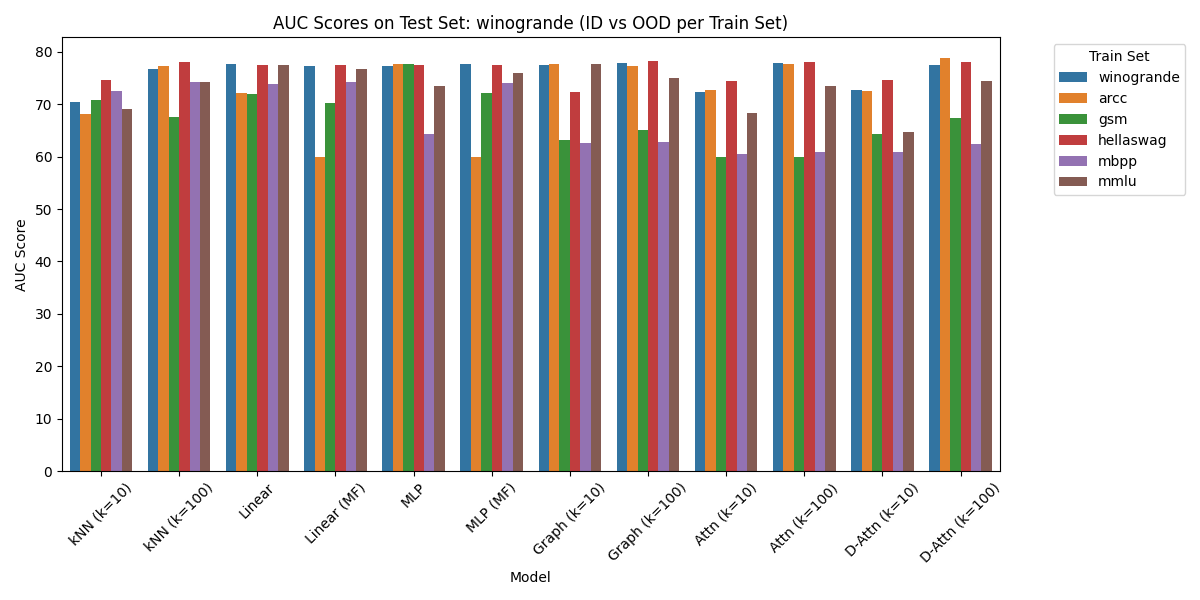}
        \caption{winogrande}
    \end{subfigure}

    \caption{AUC scores for each test set when routers are trained on different training sets. Missing values are treated as zero.}
    \label{fig:ood_analysis}
\end{figure}

\begin{table}[!h]
\centering
\caption{Average AUC scores across all test sets.}
\label{tab:ood_analysis}
\begin{tabular}{lrrr}
\toprule
Model & Avg\_ID & Avg\_OOD & Delta \\
\midrule
kNN (k=10) & 73.53 & 70.41 & 3.13 \\
kNN (k=100) & 76.54 & 73.91 & 2.63 \\
Linear & 77.03 & 73.70 & 3.33 \\
Linear (MF) & 76.63 & 69.96 & 6.67 \\
MLP & 76.45 & 72.23 & 4.22 \\
MLP (MF) & 76.77 & 71.47 & 5.30 \\
Graph (k=10) & 75.58 & 70.39 & 5.19 \\
Graph (k=100) & 76.20 & 72.38 & 3.82 \\
Attn (k=10) & 72.87 & 67.93 & 4.94 \\
Attn (k=100) & 76.71 & 72.19 & 4.52 \\
D-Attn (k=10) & 73.74 & 68.58 & 5.16 \\
D-Attn (k=100) & 76.81 & 72.32 & 4.49 \\
\bottomrule
\end{tabular}
\end{table}

\section{Embedding Analysis}\label{sec:embedding}
Table \ref{tab:embedding} examines the impact of different embedding models on routing performance. Switching from BERT to SFR embeddings provides modest improvements across most methods, with linear and MLP models showing the largest gains. Importantly, the relative ranking of routing methods remains similar across embedding types—simple methods maintain their competitive advantage regardless of the underlying representation quality.

The consistent importance of neighborhood size (k) across different embeddings reinforces that the locality properties we observe are fundamental to the routing problem rather than artifacts of particular representation spaces.

\begin{table}[!h]
    \centering
    \caption{Ablation study on prompt embedding models.}
    \label{tab:embedding}
    \scriptsize
    \begin{tabular}{cc|cccccc|c}
    \toprule
         & &  Arcc & GSM & MBPP & MMLU & Hellaswag & Winogrande & Avg\\
    \midrule
        \multirow{12}{*}{\rotatebox{90}{BERT Embedding}} & kNN (k=10) & 88.15 & 63.82 & 60.16 & 73.91 & 87.93 & 71.35 & 74.22\\
        & kNN (k=100) & 91.80 & 64.72 & 58.67 & 80.81 & 89.39 & 77.91 & 77.22\\
        & Linear & 92.27 & 65.50 & 60.76 & 81.05 & 87.87 & 78.63 & \textbf{77.68}\\
        & Linear (MF) & 91.80 & 64.55 & 60.08 & 80.91 & 88.07 & 78.18 & 77.27\\
        & MLP & 91.78 & 65.03 & 59.35 & 80.22 & 87.85 & 78.22 & 77.08\\
        & MLP (MF) & 91.70 & 65.38 & 60.05 & 80.94 & 87.85 & 78.50 & 77.40\\
        & Graph (k=10) & 91.70 & 62.23 & 56.36 & 80.76 & 87.89 & 78.52 & 76.24\\
        & Graph (k=100) & 91.39 & 62.76 & 59.38 & 80.88 & 87.89 & 78.93 & 76.87\\
        & Attn (k=10) & 89.49 & 62.08 & 56.36 & 72.12 & 87.88 & 73.26 & 73.53\\
        & Attn (k=100) & 91.63 & 64.07 & 60.85 & 81.01 & 87.86 & 78.82 & 77.37\\
        & D-Attn (k=10) & 89.43 & 62.51 & 58.67 & 74.15 & 87.91 & 73.72 & 74.40\\
        & D-Attn (k=100) & 91.67 & 64.48 & 60.14 & 80.71 & 89.40 & 78.46 & \underline{77.48}\\
    \midrule
        \multirow{12}{*}{\rotatebox{90}{SFR Embedding}} & kNN (k=10) & 88.08 & 63.88 & 61.18 & 74.61 & 84.78 & 73.08 & 74.27\\
        & kNN (k=100) & 91.33 & 64.95 & 62.22 & 80.94 & 89.08 & 77.76 & 77.71\\
        & Linear & 92.02 & 65.60 & 64.68 & 80.30 & 89.41 & 78.42 & \textbf{78.41}\\
        & Linear (MF) & 90.80 & 64.71 & 61.72 & 80.01 & 89.43 & 78.62 & 77.55\\
        & MLP & 90.05 & 62.92 & 63.91 & 80.43 & 87.75 & 77.04 & 77.02\\
        & MLP (MF) & 91.86 & 65.16 & 64.64 & 80.67 & 89.42 & 78.43 & \underline{78.36}\\ 
        & Graph (k=10) & 91.04 & 59.62 & 56.37 & 80.92 & 87.97 & 78.77 & 75.78\\ 
        & Graph (k=100) & 91.45 & 61.52 & 60.95 & 81.01 & 88.95 & 78.56 & 77.07\\
        & Attn (k=10) & 88.13 & 62.13 & 57.10 & 76.56 & 86.56 & 74.18 & 74.11\\
        & Attn (k=100) & 91.32 & 64.24 & 60.11 & 80.97 & 89.04 & 78.47 & 77.36\\
        & D-Attn (k=10) & 88.16 & 62.41 & 56.24 & 76.48 & 86.68 & 73.38 & 73.89\\
        & D-Attn (k=100) & 91.11 & 64.34 & 59.33 & 81.02 & 89.18 & 78.57 & 77.26\\
    \bottomrule
    \end{tabular}
\vspace{-5pt}
\end{table}

\section{Proof}\label{sec:appendix_theorem}

\begin{theorem}
For a query distribution $\mathcal{D}$ with $\delta$-locality in utility space:

(a) A kNN router requires a training sample size of $\Theta\left(\frac{C_{\mathcal{X}, d}}{\delta^d} \log\left(\frac{1}{\alpha}\right)\right)$ to achieve expected regret $O(\epsilon(\delta))$ with probability $1-\alpha$, where $d$ is the intrinsic dimension of the embedding space and $C_{\mathcal{X}, d}$ is a constant depending on the space.

(b) A parametric router with $L$ Lipschitz-continuous layers requires a training sample size of $\Omega(L/\epsilon(\delta)^2)$ to achieve the same regret bound.
\end{theorem}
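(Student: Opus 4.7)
The plan is to decompose the theorem into the upper bound for the kNN router in part (a), handled by a covering-plus-concentration argument, and the lower bound for parametric routers in part (b), handled by a minimax information-theoretic construction. For part (a), the first step is to invoke the notion of intrinsic (doubling) dimension to obtain a $\delta/2$-net of the support of $\mathcal{D}$ whose cardinality is bounded by $N \leq C_{\mathcal{X},d}/\delta^d$. Next, for a training sample of size $n$ drawn i.i.d.\ from $\mathcal{D}$, I would apply a union bound combined with a Chernoff estimate to show that every net ball is hit by at least one training point with probability $1-\alpha$, provided $n = \Theta\bigl((C_{\mathcal{X},d}/\delta^d)\log(1/\alpha)\bigr)$. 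Conditioned on this covering event, every test query has a training neighbor within distance $\delta$, so the $\delta$-locality assumption upgrades this proximity to a pointwise utility error of at most $\epsilon(\delta)$ for each model, yielding expected regret $O(\epsilon(\delta))$ for the kNN selection rule. The matching $\Omega$ side follows from a packing argument: with asymptotically fewer samples, a constant-measure subset of $\mathcal{D}$ must lie at distance exceeding $\delta$ from all training points, and the worst-case utility landscape consistent with $\delta$-locality forces regret above the advertised bound on that mass.

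For part (b), my plan is to set up a minimax construction by fixing the embedding geometry used in part (a) and exhibiting a family $\{u_\theta\}_{\theta\in\Theta}$ of utility functions, each realizable by an $L$-layer Lipschitz network, each consistent with the $\delta$-locality constant, and pairwise $\epsilon(\delta)$-separated in $L^2(\mathcal{D})$. A classical deep composition of localized bumps, where each layer introduces an additional binary choice indexed into disjoint regions of $\mathcal{X}$, provides a packing whose log-cardinality grows linearly with $L$. Applying Fano's inequality over this packing, combined with the standard KL-to-regret conversion under sub-Gaussian observation noise on $u$, yields $n = \Omega(L/\epsilon(\delta)^2)$, where the $1/\epsilon(\delta)^2$ factor comes from the chi-squared contraction between nearby members of the packing.

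The main obstacle is part (b): constructing a function family that is simultaneously (i) genuinely realized by an $L$-layer Lipschitz network, (ii) compliant with the same $\delta$-locality constant used in part (a), and (iii) large enough to yield the stated linear dependence on $L$. If the locality constraint is too stringent it may force all functions in the candidate family to be essentially identical, collapsing the packing size and degrading the bound to a depth-free $\Omega(1/\epsilon(\delta)^2)$. The construction must therefore use depth to encode a combinatorially rich family of locality-respecting perturbations localized in disjoint regions of the embedding space, which requires careful coordination between the network architecture and the covering structure from part (a). A secondary, milder difficulty is fixing a canonical definition of intrinsic dimension so that $C_{\mathcal{X},d}$ acquires an unambiguous meaning across both bounds; I would adopt the doubling dimension, since it furnishes the cleanest covering and packing inequalities and is standard in non-parametric learning theory.
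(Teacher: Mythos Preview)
Your treatment of part (a) matches the paper's: a $\delta/2$-covering of size $\Theta(C_{\mathcal{X},d}/\delta^d)$, a Chernoff-plus-union-bound argument to guarantee every cell is occupied with probability $1-\alpha$, and then $\delta$-locality to convert proximity into a uniform $\epsilon(\delta)$ utility error and hence $O(\epsilon(\delta))$ regret. The paper works with $k$ points per cell rather than one, but the resulting sample complexity is the same order. Your packing argument for the matching $\Omega$ side is a welcome addition; the paper does not prove the lower half of the $\Theta$ explicitly.

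Part (b) is where you diverge substantially from the paper. You propose a genuine minimax lower bound via Fano's inequality over a depth-indexed packing of locality-compliant utility functions. The paper does \emph{not} do this. Instead, it argues through approximation theory and generalization bounds: it invokes results of Barron and Yarotsky to claim that an $L$-layer network approximating the utility function to accuracy $\epsilon$ needs width $W = \Omega((1/\epsilon)^{d/L})$, and then cites spectral/Rademacher generalization bounds (Bartlett et al., Golowich et al.) to assert that learning such a network to generalization error $\epsilon$ requires $n = \Omega(WL\log W/\epsilon^2)$ samples, from which the dominant term $\Omega(L/\epsilon(\delta)^2)$ is extracted. This is a ``sufficiency-for-a-specific-learner'' argument rather than an information-theoretic necessity argument: it bounds what \emph{this} parametric approach needs, not what \emph{any} estimator must pay. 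Your Fano route, if the packing construction goes through, would yield the stronger statement and is the textbook way to certify lower bounds; the paper's route is far simpler and avoids exactly the construction difficulty you flag, at the cost of a weaker logical status for the $\Omega$ claim. Both are defensible for a proof \emph{sketch}, but you should be aware that the paper sidesteps your ``main obstacle'' entirely by not attempting a minimax bound.
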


\begin{proof}
We first derive the sample complexity for kNN and parametric approaches and then compare them.

\textbf{Part 1: kNN Router Sample Complexity}

Let $\mathcal{X}$ be the query embedding space with intrinsic dimension $d$ and $u: \mathcal{X} \times \mathcal{M} \rightarrow \mathbb{R}$ be the true utility function mapping query-model pairs to utility scores. By the $\delta$-locality property, for any two queries $x_1, x_2$ with $d(x_1, x_2) < \delta$, we have $|u(x_1, m) - u(x_2, m)| < \epsilon(\delta)$ for all models $m \in \mathcal{M}$.

\textbf{Step 1:} First, we establish what makes a kNN estimate accurate.

For a query $x$, let $\hat{u}(x, m)$ be the kNN estimate of model $m$'s utility:
\begin{equation*}
    \hat{u}(x, m) = \frac{1}{k} \sum_{x_i \in \mathcal{N}_k(x)} u(x_i, m),
\end{equation*}
where $\mathcal{N}_k(x)$ is the set of $k$ nearest neighbors of $x$ in the training set. By the $\delta$-locality property, if all $k$ neighbors are within distance $\delta$ of $x$, then the approximation error is bounded by $\epsilon(\delta)$:
\begin{equation*}
    |\hat{u}(x, m) - u(x, m)| < \epsilon(\delta).
\end{equation*}

\textbf{Step 2:} We analyze the sample complexity needed to ensure sufficient neighborhood coverage.

Let $N(\mathcal{X}, \delta/2)$ be the $\delta/2$-covering number of $\mathcal{X}$, which is the minimum number of balls of radius $\delta/2$ needed to cover $\mathcal{X}$. For a space with intrinsic dimension $d$, this covering number scales as $N(\mathcal{X}, \delta/2) = \Theta(C_{\mathcal{X}, d}/\delta^d)$, where $C_{\mathcal{X}, d}$ is a constant depending on the properties of $\mathcal{X}$ \cite{kolmogorov1959varepsilon,luxburg2004distance}.

Let's divide the space into $N(\mathcal{X}, \delta/2)$ regions corresponding to this cover. If we ensure that each region contains at least $k$ training points, then any query will have at least $k$ neighbors within distance $\delta$.

\textbf{Step 3:} We compute the sample size needed to populate each region with enough points.

Assume we draw $n$ training samples i.i.d. from distribution $\mathcal{D}$. For a region $R_i$ with probability mass $\mathcal{D}(R_i)$, the probability that fewer than $k$ samples fall into $R_i$ is:
\begin{equation*}
    P(\text{fewer than $k$ samples in $R_i$}) = \sum_{j=0}^{k-1} \binom{n}{j} \mathcal{D}(R_i)^j (1-\mathcal{D}(R_i))^{n-j}.
\end{equation*}
We need to ensure that with high probability, $X_i \geq k$ for all regions.

For a single region $R_i$, using the Chernoff bound for the lower tail of a binomial distribution with mean $\mu = nD(R_i)$:
\begin{equation*}
P(X_i \leq (1-t)\mu) \leq \exp(-t^2\mu/2) \quad \text{for any } 0 < t < 1
\end{equation*}

Setting $(1-t)nD(R_i) = k$, which implies $t = 1 - \frac{k}{nD(R_i)}$, and imposing the constraint $nD(R_i) \geq 2k$ (ensuring $t \geq \frac{1}{2}$), we get:
\begin{equation*}
    P(X_i < k) \leq \exp(-t^2nD(R_i)/2) \leq \exp\left(-\frac{(1/4)nD(R_i)}{2}\right) = \exp\left(-\frac{nD(R_i)}{8}\right)
\end{equation*}

For the union bound to work across all $N(X,\delta/2)$ regions, we need:
\begin{equation*}
P(X_i < k) \leq \frac{\alpha}{N(X,\delta/2)} \text{ for each region } R_i
\end{equation*}

Therefore:
$$\exp\left(-\frac{nD(R_i)}{8}\right) \leq \frac{\alpha}{N(X,\delta/2)}$$

Taking logarithms:
$$-\frac{nD(R_i)}{8} \leq \ln(\alpha) - \ln(N(X,\delta/2))$$
$$nD(R_i) \geq 8\ln\left(\frac{1}{\alpha}\right) + 8\ln\left(N(X,\delta/2)\right)$$

To maintain consistency with our constraint $nD(R_i) \geq 2k$, we require:
$$2k \geq 8\ln\left(\frac{1}{\alpha}\right) + 8\ln\left(N(X,\delta/2)\right)$$
$$k \geq 4\ln\left(\frac{1}{\alpha}\right) + 4\ln\left(N(X,\delta/2)\right)$$

To satisfy $nD(R_i) \geq 2k$ for all regions, we need:
$$n \geq \frac{2k}{\min_i D(R_i)}$$

Under a reasonable assumption that the distribution $\mathcal{D}$ doesn't assign extremely small probability to any significant region (specifically, $\min_i D(R_i) \geq \frac{c}{N(X,\delta/2)}$ for some constant $c > 0$), we get:
\begin{equation*}
n \geq \frac{2kN(X,\delta/2)}{c} = \Theta\left(\frac{k \cdot C_{X,d}}{c \cdot \delta^d}\right)
\end{equation*}

Substituting $k = \Theta\left(\ln\left(\frac{1}{\alpha}\right) + \ln\left(N(X,\delta/2)\right)\right) = \Theta\left(\ln\left(\frac{1}{\alpha}\right) + \ln\left(\frac{C_{X,d}}{\delta^d}\right)\right)$:

$$n = \Theta\left(\frac{C_{X,d}}{\delta^d} \cdot \ln\left(\frac{1}{\alpha}\right) + \frac{C_{X,d}}{\delta^d} \cdot \ln\left(\frac{C_{X,d}}{\delta^d}\right)\right)$$
$$n = \Theta\left(\frac{C_{X,d}}{\delta^d} \cdot \ln\left(\frac{1}{\alpha}\right) + \frac{C_{X,d}}{\delta^d} \cdot \left(\ln(C_{X,d}) + d \cdot \ln\left(\frac{1}{\delta}\right)\right)\right)$$
$$n = \Theta\left(\frac{C_{X,d}}{\delta^d} \cdot \ln\left(\frac{1}{\alpha}\right) + \frac{C_{X,d} \cdot d}{\delta^d} \cdot \ln\left(\frac{1}{\delta}\right)\right)$$

For fixed $\delta$ and decreasing $\alpha$, the dominant term is $\Theta\left(\frac{C_{X,d}}{\delta^d} \cdot \ln\left(\frac{1}{\alpha}\right)\right)$.

By the union bound, the probability that any region has fewer than $k$ samples is at most $\alpha$. This ensures that with probability at least $1-\alpha$, every query point has at least $k$ neighbors within distance $\delta$.

\textbf{Step 4:} We connect this to the regret bound.

With probability at least $1-\alpha$, every query has at least $k$ neighbors within distance $\delta$. For such queries, by the $\delta$-locality property, the kNN router achieves regret $O(\epsilon(\delta))$ since: $|\hat{u}(x,m) - u(x,m)| < \epsilon(\delta)$ for all models $m$. 

When selecting the model with the highest predicted score: 
$$m_{kNN} = \arg\max_m \hat{u}(x,m),$$
we have: $u(x,m_{kNN}) > \hat{u}(x,m_{kNN}) - \epsilon(\delta) \leq \hat{u}(x,m*) - \epsilon(\delta) > u(x,m*) - 2\epsilon(\delta).$

Therefore, the regret is bounded by: 
$$u(x,m*) - u(x,m_{kNN}) < 2\epsilon(\delta)$$

\textbf{Part 2: Parametric Router Sample Complexity}
For a parametric router with $L$ Lipschitz-continuous layers, we analyze the sample complexity required to learn an accurate model of the performance function.

\textbf{Step 1:} We establish the approximation capacity.

Following the results of \cite{barron1993universal} and \cite{yarotsky2017error}, a neural network with $L$ layers and width $W$ can approximate functions in certain smoothness classes to accuracy $\epsilon$ if $W = \Omega((1/\epsilon)^{d/L})$, where $d$ is the input dimension.

For the utility function $u(x, m)$, which maps from $\mathbb{R}^d \times \mathcal{M}$ to $\mathbb{R}$, a network with $L$ Lipschitz-continuous layers requires $\Omega(L \cdot (1/\epsilon)^{d/L})$ parameters to achieve uniform approximation error at most $\epsilon$.

\textbf{Step 2:} We relate approximation capacity to sample complexity.

By standard generalization bounds for neural networks \cite{bartlett2017spectrally,golowich2018size}, the sample complexity to learn the parameters of such a network with generalization error at most $\epsilon$ is:

$$n_{\text{param}} = \Omega\left(\frac{W \cdot L \cdot \log(W)}{\epsilon^2}\right)$$

Substituting $W = \Omega(L \cdot (1/\epsilon)^{d/L})$, we get:

$$n_{\text{param}} = \Omega\left(\frac{L^2 \cdot (1/\epsilon)^{d/L} \cdot \log(L \cdot (1/\epsilon)^{d/L})}{\epsilon^2}\right)$$

For small values of $\epsilon$ and moderate values of $L$, the dominant term is $\Omega(L/\epsilon^2)$, which represents a lower bound on the sample complexity.

\textbf{Step 3:} We connect to the regret bound.

To achieve a regret bound of $O(\epsilon(\delta))$, the parametric model must approximate the true performance function with error at most $\epsilon(\delta)/2$ uniformly over the query space. This requires a sample complexity of $\Omega(L/\epsilon(\delta)^2)$.

\textbf{Part 3: Comparison}
Now we compare the sample complexity of the two approaches:

kNN router: $\Theta\left(\frac{C_{\mathcal{X}, d}}{\delta^d} \cdot \log\left(\frac{1}{\alpha}\right)\right)$

Parametric router: $\Omega(L/\epsilon(\delta)^2)$

For small values of $\epsilon(\delta)$ (high accuracy requirements), the parametric router's sample complexity grows quadratically with $1/\epsilon(\delta)$, while the kNN router's complexity depends on $1/\delta^d$ and only logarithmically on $1/\alpha$.

When the embedding space has a low intrinsic dimension $d$ (which is often the case for well-designed embedding spaces), and $\epsilon(\delta)$ decreases rapidly with $\delta$ (strong locality property), the kNN router requires significantly fewer training samples than a parametric router to achieve the same regret bound.

\end{proof}

\end{document}